\documentclass[final,lettersize,conference,10pt]{IEEEtran}
\IEEEoverridecommandlockouts
\usepackage[compact]{titlesec}
\titlespacing{\section}{0pt}{2ex}{1ex}
\titlespacing{\subsection}{0pt}{1ex}{0ex}
\usepackage{cite}
\usepackage{flushend}
\usepackage{empheq}
\usepackage{amsmath, amssymb, amsfonts, cases}
\usepackage{dsfont}
\usepackage{hyperref}
\usepackage{amsthm}
\usepackage{algorithm,algorithmic}
\usepackage{graphicx}
\usepackage{textcomp}
\usepackage[caption=false]{subfig}
\usepackage{setspace}
\usepackage{float}
\usepackage{xcolor}
\usepackage{mathtools}
\usepackage{cite}
\usepackage{times}
\usepackage{color}
\usepackage{multirow, siunitx}
\usepackage{multicol}
\usepackage{booktabs}
\usepackage{stackrel, url}
\usepackage{lipsum, enumitem}
\theoremstyle{remark}
\newcommand\scalemath[2]{\scalebox{#1}{\mbox{\ensuremath{\displaystyle #2}}}} 
\newtheorem{definition}{\bf \emph{Definition}}
\newtheorem{theorem}{\bf \emph{Theorem}}
\newtheorem{lemma}{\bf \emph{Lemma}}

\newtheorem{remark}{\textit{Remark}}

\newtheorem{assumption}{\bf\emph{Assumption}}
\def\cA{{\mathcal{A}}} \def\cB{{\mathcal{B}}} \def\cC{{\mathcal{C}}} \def\cD{{\mathcal{D}}}
 \def\cF{{\mathcal{F}}} \def\cG{{\mathcal{G}}} 
   
 \def\cN{{\mathcal{N}}}  
\def\cQ{{\mathcal{Q}}} \def\cR{{\mathcal{R}}} \def\cS{{\mathcal{S}}} 
   \def\cX{{\mathcal{X}}}

\def\b({ \bigg( }
\def\b){ \bigg) }

\def\b[{\bigg[}
\def\b]{\bigg]}

\def\ba{{\mathbf{a}}} \def\bb{{\mathbf{b}}}   \def\be{{\mathbf{e}}} 
 \def\bg{{\mathbf{g}}}   
  \def\bm{{\mathbf{m}}} \def\bn{{\mathbf{n}}} 
    
 \def\bv{{\mathbf{v}}} \def\bw{{\mathbf{w}}} \def\bx{{\mathbf{x}}} \def\by{{\mathbf{y}}}
\def\bz{{\mathbf{z}}} 

  \def\bC{{\mathbf{C}}}  
   \def\bI{{\mathbf{I}}}



  \def\R{{\mathbb{R}}}        \def\E{\mathbb{E}}

%


\def\argmin{\mathop{\mathrm{argmin}}}

     \def\d4{\!\!\!\!}            \def\ep{\epsilon}


\def\lL{\langle}  \def\lR{\rangle}



  \def\-{\! - \!}  \def\+{\! + \!}  \def\={\! = \!}  \def\>{\! > \!}


\newcommand{\bef}{\begin{figure}}
\newcommand{\eef}{\end{figure}}
\newcommand{\beq}{\begin{eqnarray}}
\newcommand{\eeq}{\end{eqnarray}}




\def\sumM{\sum_{m=1}^{M}}

\def\Wnabla{\widetilde{\nabla}}

\def\wnabla{\widetilde{\nabla}}

\def\whw{\check{\bw}}

\def\bbw{\overline{\bw}}
\def\wst{ \bw^{\star}}

\def\indi{\mathds{1}}

\raggedbottom
\begin{document}
\title{Privacy-Preserving Quantized Federated Learning with Diverse Precision}

\author{
\IEEEauthorblockN{
Dang~Qua~Nguyen,~\IEEEmembership{Student Member,~IEEE}, Morteza Hashemi,~\IEEEmembership{Senior Member,~IEEE}, \\ Erik Perrins,~\IEEEmembership{Senior Member,~IEEE},  Sergiy A. Vorobyov,~\IEEEmembership{Fellow,~IEEE}, David J. Love,~\IEEEmembership{Fellow,~IEEE}, \\ and Taejoon~Kim,~\IEEEmembership{Senior Member,~IEEE}
}

\thanks{
D. Q. Nguyen, M. Hashemi, and E. Perrins are with the Department of Electrical Engineering and Computer Science, The University of Kansas, Lawrence, KS 66045 USA (e-mail: \{quand, mhashemi, esp\}@ku.edu). Sergiy A. Vorobyov is with the Department of Information and Communications Engineering, Aalto University, 02150 Espoo, Finland (e-mail:
sergiy.vorobyov@aalto.fi). D. J. Love is with the School of Electrical and Computer
Engineering, Purdue University, West Lafayette, IN 47907 USA (e-mail:djlove@purdue.edu). Taejoon Kim is with the School of Electrical, Computer and Energy Engineering, Arizona State University, Tempe, AZ  85287 USA (e-mail: taejoonkim@asu.edu).

This work was supported in part by NSF under grants CNS2451268, CNS2225578, and CNS2514415; ONR under grant N000142112472; the NSF and Office of the Under Secretary of Defense (OUSD) – Research and Engineering, Grant ITE2515378, as part of the
NSF Convergence Accelerator Track G: Securely Operating Through 5G Infrastructure Program; and in part by the Research Council of Finland under Grant 357715.
}
}
\maketitle	
\begin{abstract}
Federated learning (FL) has emerged as a promising paradigm for distributed machine learning, {enabling collaborative training of a global model across multiple local devices without requiring them to share raw data.} 
Despite its advancements, FL is limited by factors such as: (i) privacy risks arising from the unprotected transmission of local model updates to the fusion center (FC) and (ii) decreased learning utility caused by heterogeneity in model quantization resolution across participating devices. 
Prior work typically addresses only one of these challenges because maintaining learning utility under both privacy risks and quantization heterogeneity is a non-trivial task. 
In this paper, our aim is therefore to improve the learning utility of a privacy-preserving FL that allows clusters of devices with different quantization resolutions to participate in each FL round. 
Specifically, we introduce a novel stochastic quantizer (SQ) that is designed to simultaneously achieve differential privacy (DP) and minimum quantization error. 
Notably, the proposed SQ guarantees bounded distortion, unlike other DP approaches. 
To address quantization heterogeneity, we introduce a cluster size optimization technique combined with  a linear fusion approach to enhance model aggregation accuracy.
Numerical simulations validate the benefits of our approach in terms of privacy protection and learning utility compared to the conventional LaplaceSQ-FL algorithm.
\end{abstract}
\begin{IEEEkeywords}Federated learning (FL), differential privacy (DP), stochastic quantization (SQ), quantization heterogeneity, convergence analysis.
\end{IEEEkeywords}
\section{Introduction}
Federated learning (FL) is an efficient machine learning (ML) framework, allowing local devices to collaboratively train a global model \cite{VSmith2020, YeLi2021, VPoor2022}. 
Unlike conventional centralized ML, which requires transferring training data to a fusion center (FC), FL retains training data on distributed devices and trains models locally.  
Then, the trained local model updates are sent to the FC for global aggregation. 
FL has been widely adopted in applications such as next-word prediction \cite{stremmel2021, lin2021fednlp}, autonomous vehicles \cite{Chris2023, Dinh2021}, eHealthcare \cite{ zhang2023, Vladimir2023}, and large language models (LLMs) \cite{mcmahan2017LLM, ye2024llm}. 
Despite its widespread applicability, FL encounters several challenges.
First, FL is susceptible to privacy leakage \cite{Kairouz2021}. 
As demonstrated by model inversion attacks \cite{Zhu2019, huang2021, Jonas2020}, an \mbox{adversary} eavesdropping on local model updates can \mbox{reconstruct} sensitive local training data. 
These attacks are difficult to detect due to their passive and nonintrusive nature.
Second, the inherent bandwidth and power limitations of FL networks impose communication constraints, necessitating the quantization of local model updates by devices.
In practice, the heterogeneity of these devices often exhibits varying quantization resolutions, leading to inconsistent model update quality that negatively impacts learning utility \cite{Kairouz2021, VPoor2022}.
Therefore, simultaneously addressing data privacy and quantization heterogeneity is crucial for the effective deployment of FL.
\subsection{Related Works}
While various approaches have been proposed to mitigate privacy leakage or to incorporate quantization resolution \mbox{heterogeneity}, they often focus on one issue at the expense of the other\cite{Kairouz2021, VPoor2022}. 
For instance, differential privacy (DP) \cite{Dwork2014} has been adopted in FL \cite{Kang2020, Koda2020, Simeone2021, Tandon2021, Muah2021, Wei2022, Qua2023Asilomar}  to reduce privacy leakage by adding artificial noise to model updates. 
There is a fundamental tradeoff between privacy and learning utility: increasing noise power improves privacy at the cost of utility, and vice versa.\looseness=-1
Other privacy-preserving approaches apply stochastic quantization (SQ) methods to local model updates \cite{Chen2021, youn2023randomized, Wang2024ICC, Lang2023}. 
Unlike conventional DP mechanisms \cite{Kang2020, Koda2020, Simeone2021, Tandon2021, Muah2021, Wei2022, Qua2023Asilomar}, these methods \cite{Chen2021, youn2023randomized, Wang2024ICC, Lang2023} exploit the inherent randomness of a stochastic quantizer to offer a DP guarantee.
However, previous works \cite{Chen2021, youn2023randomized, Wang2024ICC, Lang2023} focus primarily on achieving a DP guarantee without mitigating the quantization error.
For example, the SQ in \cite{Lang2023} could mimic the Laplace DP mechanism \cite{Dwork2014}, but it results in a degradation of the utility under strict DP requirements.
In general, these previous works \cite{Kang2020, Koda2020, Simeone2021, Tandon2021, Muah2021, Wei2022, Qua2023Asilomar, Chen2021, youn2023randomized, Wang2024ICC, Lang2023} neglect the heterogeneity in quantization resolution between local devices.
Recent studies \cite{CongShen2021, CongShen2024} have addressed quantization \mbox{heterogeneity} by proposing adaptive fusion methods that assign larger weights to model updates from higher resolution devices. 
While these approaches \cite{CongShen2021, CongShen2024} are plausible, the underlying assumption {is that link conditions are ideal.}
In practice, unreliable links introduce noise into quantized model updates and assigning larger weights to these noisy updates can degrade the learning utility.
Effectively balancing learning utility with privacy under quantization heterogeneity in  FL systems is a non-trivial task requiring a unified framework that leverages techniques from data privacy, communication theory, and signal processing.
\subsection{Summary of Contributions} 
The goal of our work is to improve the learning utility of privacy-preserving FL in the presence of heterogeneous quantization resolution across devices.
To achieve this, we develop a unified framework that reduces quantization error and fusion error, consequently minimizing learning error while ensuring DP.
In our FL network, devices are partitioned into groups, wherein devices within the same group share a common quantization resolution, as detailed later in Section~\ref{SecIVA}.
Due to the limited bandwidth of the communication links, only a cluster of devices from each group is allowed to participate in each FL round \cite{VPoor2022, Chang2022}. \looseness=-1

The main contributions of this paper are summarized below. 
\begin{itemize}[leftmargin = 6mm, itemsep = 0.01in, parsep = 0.03in, topsep = 0.01in, partopsep = 0.05in]
\item We design a novel SQ that minimizes quantization distortion while preserving DP guarantees. 
The proposed DP-preserving SQ maps the source to the nearest quantization level with a certain probability, in which the probability is determined by the DP requirement. 
Unlike prior quantizers \cite{Chen2021, youn2023randomized, Wang2024ICC, Lang2023}, our approach explicitly minimizes quantization distortion while ensuring a DP guarantee. 
Our analysis indicates that the distortion associated with the proposed quantizer is bounded, in contrast to the prior SQ method \cite{Alistarh2017} combined with Laplace mechanism \cite{Dwork2014} that leads to unbounded distortion.\looseness=-1
\item To maintain learning utility under quantization heterogeneity, we propose an approach that integrates cluster sizes optimization with a linear fusion technique. 
The cluster sizes optimization improves learning utility by minimizing the learning error upper bound, obtained through the convergence analysis of the proposed DP-preserving FL, while the linear fusion method enhances model aggregation accuracy across devices with varying quantization resolutions. 
Our convergence analysis reveals that the learning error bound scales linearly with cluster sizes. 
We formulate the cluster size optimization as a linear integer programming (LIP) problem and identify optimal cluster sizes.  
\item Numerical simulations are presented to demonstrate the advantages of the proposed approach in terms of data privacy protection and learning utility. 
These results confirm the superior capability of the proposed approach in protecting data privacy while maintaining learning utility, compared to existing approaches.
\end{itemize}
The remainder of this paper is organized as follows.
Section~\ref{SecII.B} proposes a novel SQ for ensuring a DP guarantee.
{Section~\ref{SecIV} presents the network model, the proposed FL algorithm, and the fusion technique.}
Section~\ref{SecV} analyzes the convergence behavior of the proposed approach and optimizes the cluster sizes.
Section~\ref{SecVI} demonstrates the benefits of the proposed framework through numerical simulations.
The paper is concluded in Section~\ref{SecVII}.
\textit{Notation}: $\ba = [a_1,\dots,a_d]^T \in \R^{d \times 1}$ denotes a column vector with the $i$th entry $a_i$, for $i=1,\dots,d$. 
$\|\ba\|_p$ denotes the $\ell_p$-norm of $\ba$. 
$|\cA|$ denotes the cardinality of the set $\cA$.
$\bI$ represents the identity matrix with appropriate dimensions. 
$\langle \bx, \by \rangle$ denotes the inner product of $\bx$ and $\by$. 
$\E[\cdot]$ represents the expectation. 
$\mathbf{0}$ denotes the all-zeros vector with an appropriate dimension. 
$\cN(\bm,\bC)$ denotes the Gaussian distribution with mean $\bm$ and covariance matrix $\bC$.
$\nabla f$ and $\Wnabla f$ denote the gradient and stochastic gradient of $f$, respectively.
The indicator function $\indi_\cX(x) = 1$ if $x \in \cX$ and $\indi_{\cX}(x) = 0$, otherwise. \looseness=-1
\section{Differentially~Private~Stochastic~{Quantization}}\label{SecII.B}
This section first presents the definition of DP. 
Then, we propose our differentially private SQ approach and analyze its minimum distortion.
\subsection{Differential Privacy}
We begin with introducing necessary definitions.  
\begin{definition}\label{neighborDF}
    Two finite data sets $\cA$ and $\cA'$ are neighbors if they differ by only one data point.
\end{definition}
\begin{definition}\label{DPdef}($\epsilon$-DP \cite{Dwork2014}) A randomized mechanism ${\mathsf{M}}: \mathbb{D} \rightarrow \cR$, where $\mathbb{D}$ is the domain of data sets and $\cR$ is the range of $\mathsf{M}$, satisfies $\epsilon$-DP if for any two neighbor data sets $\cA, \cA' \in \mathbb{D}$ and for any subset of output $\cS \subseteq \cR$, the following holds
\begin{equation}
\label{eq:DPdef}
    \Pr[\mathsf{M}(\cA) \in \cS] \leq e^{\epsilon} \Pr[\mathsf{M}(\cA') \in \cS], 
\end{equation}where $\epsilon\geq 0$ denotes the privacy loss. 
\end{definition}
\noindent The $\epsilon$-DP criterion in \eqref{eq:DPdef} indicates that smaller $\epsilon$ provides stronger privacy protection, making it harder to infer the presence of any data point in $\cA$, even if the adversary has knowledge of the neighboring data set $\cA'$.
We present below two useful lemmas.
\begin{lemma}(Sequential composition \cite[Theorem 3.16]{Dwork2014})\label{lm:composition1}
    Let $\mathsf{M}_i$ be an $\epsilon_i$-DP mechanism for $i=1,\dots,d$. 
    Then, $\mathsf{M} = (\mathsf{M}_1(\cA),\dots,\mathsf{M}_d(\cA))$ is an $(\sum_{i=1}^{d} \epsilon_i)$-DP mechanism.
\end{lemma}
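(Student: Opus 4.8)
The plan is to prove the sequential composition lemma directly from the $\epsilon_i$-DP definition of each mechanism $\mathsf{M}_i$, by showing that the joint output distribution of $\mathsf{M} = (\mathsf{M}_1(\cA),\dots,\mathsf{M}_d(\cA))$ on neighboring data sets $\cA, \cA'$ differs by at most a multiplicative factor $e^{\sum_i \epsilon_i}$ on any measurable event. Since this is cited as \cite[Theorem 3.16]{Dwork2014}, the expected treatment is a short self-contained argument rather than anything novel; I would present it for completeness.

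First I would fix neighboring data sets $\cA$ and $\cA'$ and an arbitrary measurable output set $\cS \subseteq \cR_1 \times \cdots \times \cR_d$. To keep the exposition clean I would first handle the case where each $\mathsf{M}_i$ has a discrete (or density) output, so that I can write point masses $p_i(\cdot)$; the general measure-theoretic case follows by the same telescoping after replacing sums with integrals. Write the joint probability of a particular output tuple $(r_1,\dots,r_d)$ under $\cA$ as a product of conditionals, $\Pr[\mathsf{M}(\cA) = (r_1,\dots,r_d)] = \prod_{i=1}^d \Pr[\mathsf{M}_i(\cA) = r_i \mid r_1,\dots,r_{i-1}]$. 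The key point is that each factor, viewed as a one-dimensional mechanism conditioned on the earlier outputs, is still $\epsilon_i$-DP, so each conditional ratio between $\cA$ and $\cA'$ is bounded by $e^{\epsilon_i}$. Multiplying these bounds gives a pointwise ratio bound of $e^{\sum_{i=1}^d \epsilon_i}$ between the joint densities. Summing (or integrating) over $\cS$ then yields $\Pr[\mathsf{M}(\cA) \in \cS] \le e^{\sum_{i=1}^d \epsilon_i} \Pr[\mathsf{M}(\cA') \in \cS]$, which is exactly the claimed $(\sum_{i=1}^d \epsilon_i)$-DP guarantee.

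A cleaner alternative I might use instead is induction on $d$: the base case $d=1$ is the hypothesis, and for the inductive step I would treat $\mathsf{M}_{1:d-1}$ as a single $(\sum_{i=1}^{d-1}\epsilon_i)$-DP mechanism and compose it with $\mathsf{M}_d$, invoking the two-mechanism case. This reduces the whole argument to the two-fold composition, which is the cleanest place to do the density/integral manipulation once.

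The main obstacle — really the only subtle point — is justifying that the conditional mechanism $r_i \mapsto \Pr[\mathsf{M}_i(\cA)=r_i \mid r_1,\dots,r_{i-1}]$ still satisfies $\epsilon_i$-DP. In the usual formulation of composition the mechanisms are assumed to be run \emph{independently} given the data set (the randomness of $\mathsf{M}_i$ does not depend on the outputs of the others), in which case the conditioning is vacuous and each factor is literally $\Pr[\mathsf{M}_i(\cA)=r_i]$, so the bound $e^{\epsilon_i}$ is immediate; I would state this independence assumption explicitly. If one allows adaptivity (each $\mathsf{M}_i$ may depend on previous outputs), then one needs the observation that for every fixed prefix the resulting map is a legitimate $\epsilon_i$-DP mechanism on $\cA$, and the same telescoping goes through. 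I would note this but, consistent with the level of the cited reference, keep the proof to the independent (non-adaptive) case that is all the paper uses.
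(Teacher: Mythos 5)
Your proof is correct: the telescoping product of per-mechanism ratios (with the independence caveat made explicit, and the inductive reduction to two-fold composition as an alternative) is exactly the standard argument for sequential composition. The paper itself does not prove this lemma but simply cites \cite[Theorem 3.16]{Dwork2014}, and your argument matches the proof given in that reference, so there is nothing to reconcile.
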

\noindent In Section~\ref{SecIV}, we apply Lemma~\ref{lm:composition1} to a composite DP mechanism formed by combining multiple DP mechanisms.
\begin{lemma}(Laplace mechanism \cite[Theorem 3.6]{Dwork2014})\label{LpMechanism}
    Given any function $h:\mathbb{D} \rightarrow \R^{d \times 1}$, the Laplace mechanism
    $\mathsf{L}(h(\cA)) = h(\cA) + \bz$,
    ensures $\epsilon$-DP, where $\bz = [z_1,\dots,z_d]^T \in \R^{d \times 1}$ and $\{z_i\}_{i=1}^{d}$ are independent and identically distributed (i.i.d.) Laplace random variables with mean $0$ and variance $\frac{2\rho^2}{\epsilon^2}$, where $\rho$ is the $\ell_1$-sensitivity, defined for {any two neighbor data sets $\cA, \cA' \in \mathbb{D}$ as}  
    \begin{equation}\label{eq:l1sen}
        \rho = \max_{\cA, \cA'}\|h(\cA) - h(\cA')\|_1.
    \end{equation}
\end{lemma}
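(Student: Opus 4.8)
The plan is to reproduce the classical density-ratio argument. First I would write down the probability density of the output $\mathsf{L}(h(\cA)) = h(\cA) + \bz$ at an arbitrary point $\by \in \R^{d\times 1}$. Since the entries $z_i$ are i.i.d.\ zero-mean Laplace with variance $\frac{2\rho^2}{\epsilon^2}$, each $z_i$ has scale parameter $b = \rho/\epsilon$ (recall a Laplace variable of scale $b$ has variance $2b^2$) and density $\frac{1}{2b}e^{-|z_i|/b}$. Hence the joint density of the output conditioned on $\cA$ is $p_\cA(\by) = \prod_{i=1}^{d}\frac{1}{2b}\exp\!\big(-|y_i - [h(\cA)]_i|/b\big)$, and analogously $p_{\cA'}(\by)$ for the neighboring data set $\cA'$.

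Next I would bound the pointwise likelihood ratio. The normalization constants $\frac{1}{2b}$ cancel, leaving $\frac{p_\cA(\by)}{p_{\cA'}(\by)} = \exp\!\big(\frac{1}{b}\sum_{i=1}^{d}\big(|y_i - [h(\cA')]_i| - |y_i - [h(\cA)]_i|\big)\big)$. Applying the reverse triangle inequality coordinatewise gives $|y_i - [h(\cA')]_i| - |y_i - [h(\cA)]_i| \le |[h(\cA)]_i - [h(\cA')]_i|$, so the exponent is at most $\frac{1}{b}\|h(\cA) - h(\cA')\|_1 \le \frac{\rho}{b} = \epsilon$, where the first bound uses the definition of the $\ell_1$-sensitivity in \eqref{eq:l1sen} and the last equality uses $b = \rho/\epsilon$. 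Therefore $p_\cA(\by) \le e^{\epsilon}\, p_{\cA'}(\by)$ for every $\by$.

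Finally I would integrate this pointwise inequality over an arbitrary measurable set $\cS \subseteq \R^{d\times 1}$: $\Pr[\mathsf{L}(h(\cA)) \in \cS] = \int_\cS p_\cA(\by)\, d\by \le e^{\epsilon}\int_\cS p_{\cA'}(\by)\, d\by = e^{\epsilon}\Pr[\mathsf{L}(h(\cA')) \in \cS]$, which is precisely the $\epsilon$-DP condition \eqref{eq:DPdef}.

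There is no genuine obstacle here — the statement is standard, see \cite[Theorem 3.6]{Dwork2014} — only two points that require care. First, one must correctly convert the prescribed variance $\frac{2\rho^2}{\epsilon^2}$ into the scale $b = \rho/\epsilon$ that actually appears in the Laplace density. Second, the reverse-triangle-inequality step must be summed over all $d$ coordinates so that the bound is governed by the $\ell_1$-sensitivity $\rho$ defined in \eqref{eq:l1sen}, rather than an $\ell_2$ or $\ell_\infty$ quantity; this is exactly why i.i.d.\ Laplace noise (with its product-of-exponentials density) is the natural choice for achieving pure $\epsilon$-DP.
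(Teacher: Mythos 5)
Your proof is correct: the paper does not prove this lemma itself but simply cites \cite[Theorem 3.6]{Dwork2014}, and your density-ratio argument (variance-to-scale conversion $b=\rho/\epsilon$, coordinatewise reverse triangle inequality summed into the $\ell_1$-sensitivity, then integration over $\cS$) is exactly the standard proof given in that reference. Nothing is missing.
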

{
It is noteworthy that the Laplace mechanism in Lemma 2 can be combined with a SQ method. 
Such a mechanism, which we call LaplaceSQ $\mathsf{L(a)}$, is formed by combining  
the prior $b$-bits stochastic quantizer $Q_b(a)$ in \cite{Alistarh2017} with the standard Laplace mechanism in Lemma~\ref{LpMechanism} with $h(a) = Q_b(a)$, in which $a \in \R$  is a uniform random variable drawn from the interval $[\underline{a}, \overline{a}]$ with $\underline{a} \leq \overline{a}$.
In particular, 
{\begin{equation}\label{eqLaplaceSQ}
    \mathsf{L}(a) = Q_b(a) + z,
\end{equation}}where $z$ is a Laplace noise with mean $0$ and variance $\frac{2\rho^2}{\epsilon^2_1}$ to achieve $\epsilon_1$-DP guarantee.
Taking the LaplaceSQ as a benchmark, we compare it with our SQ method proposed in the next subsection.} 
\subsection{Differentially Private Stochastic Quantization}\label{secIIIB}
We first define the set of {real-valued quantization levels} $\{q_1, q_2, \dots, q_{2^b}\}$, uniformly spaced in the interval $[\underline{a}, \overline{a}]$, where $q_1 <q_2 <\dots <q_{2^b} $.
Each level $q_j$ is then written as $q_j = \underline{a} + (j-1)\frac{\overline{a} - \underline{a}}{2^b-1}$, for $j=1,\dots,2^b$.
Given that $a$ lies within the interval $[q_i, q_{i+1})$ for some $i$, a SQ mechanism assigns $a$ to one of the two quantization levels: $q_i$ or $q_{i+1}$ according to a probabilistic mapping.
Specifically, the stochastic quantizer $\cQ_b(a)$ is given by \looseness=-1
    \vspace{-1mm}
    \begin{subnumcases}{\label{eq2} \cQ_b(a) = }
     q_i,& \text{ with probability }     {$p$}, \label{eq2a}\\ 
      q_{i+1},& \text{ with probability } {$1-p$}, \label{eq2b}
    \vspace{-1mm}
    \end{subnumcases}
where $0 \leq  p \leq 1$.
If the input is a vector $\ba \in \R^{d\times 1}$, the quantizer in \eqref{eq2} is applied element-wise, i.e., $\cQ_b(\ba) = [\cQ_b(a_1),\dots,\cQ_b(a_d)]^T\in \R^{d \times 1}$. 
It is noted that prior stochastic quantizers \cite{Alistarh2017, Hassani2019, Chang2021} designed $p$ in \eqref{eq2} without providing DP guarantees \cite{Wang2024ICC}.
Unlike \cite{Alistarh2017, Hassani2019, Chang2021}, we establish an optimization framework for determining $p$ in \eqref{eq2} to ensure an $\epsilon_1$-DP guarantee while minimizing the expected quantization distortion $\E[|\cQ_{b}(a) - a|^2]$, {leading to privacy-preserving quantizer $\cQ_{b,\epsilon_1}(a)$}. 
The optimization problem is therefore given by  \looseness=-1
\vspace{-2mm}
\begin{subequations}
\label{eq3}
    \beq
    \min_{p} && p(q_i-a)^2 + (1-p)(q_{i+1}-a)^2, \label{eq3a}\\ 
    \text{subject to } && e^{-\epsilon_1} \leq \frac{p}{1-p} \leq e^{\epsilon_1},  \label{eq3c}
    \vspace{-2mm}
    \eeq 
\end{subequations}where the objective function \eqref{eq3a} is the expected quantization distortion. 
The constraint  \eqref{eq3c} ensures the $\epsilon_1$-DP requirement because $\frac{\Pr[\cQ_{b,\epsilon_1}(a) = q_i]}{\Pr[\cQ_{b,\epsilon_1}(a')= q_i] } \leq e^{\epsilon_1}$ and $\frac{\Pr[\cQ_{b,\epsilon_1}(a) = q_{i+1}]}{\Pr[\cQ_{b,\epsilon_1}(a')= q_{i+1}]} \leq e^{\epsilon_1}$, for $a,a' \in [q_i, q_{i+1})$.
\looseness=-1
The problem in \eqref{eq3} is solved using bounded linear programming (BLP), which admits an optimal solution that occurs at the boundary of its feasible region $\cF = \{x \mid 0 \leq x \leq 1,~e^{-\epsilon_1} \leq \frac{x}{1-x} \leq e^{\epsilon_1}\}$ \cite{wagner1958dual}. 
The simplex method \cite{luenberger1984} or interior-point method \cite{karmarkar1984} are commonly employed to obtain optimal solution, in which the simplex method exhibits combinatorial complexity, while the interior-point method exhibits polynomial complexity.
Instead of solving \eqref{eq3} numerically, the following lemma identifies the closed-form solution for the BLP in \eqref{eq3}.\looseness=-1
\begin{lemma}
    \label{lm1}
    The optimal solution to problem \eqref{eq3} for the stochastic quantizer $\cQ_{b,\epsilon_1}(a)$ is given by
    \begin{subnumcases}{\label{eq4}p^{\star} = }\!\!
      \scalemath{1}{\frac{e^{\epsilon_1}}{e^{\epsilon_1}+1},} \text{ if $|q_i-a| \leq  |q_{i+1}-a|$}, \label{4a}\\ 
    \!\!\scalemath{1}{\frac{1}{e^{\epsilon_1}+1}}, \text{ otherwise}. \label{eq4b}
    \end{subnumcases}
\end{lemma}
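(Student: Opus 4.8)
The plan is to exploit the fact that the objective in \eqref{eq3a} is affine in the single scalar $p$, so that the BLP \eqref{eq3} reduces to minimizing a line over an interval. First I would expand
$f(p) := p(q_i-a)^2 + (1-p)(q_{i+1}-a)^2 = \big[(q_i-a)^2 - (q_{i+1}-a)^2\big]\,p + (q_{i+1}-a)^2$,
so the slope is $c := (q_i-a)^2 - (q_{i+1}-a)^2$, whose sign is controlled exactly by the comparison between $|q_i-a|$ and $|q_{i+1}-a|$: we have $c \le 0$ iff $|q_i-a| \le |q_{i+1}-a|$, and $c > 0$ otherwise. This is precisely the case split appearing in \eqref{eq4}.

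Next I would rewrite the DP constraint \eqref{eq3c} as an explicit interval for $p$. Since $g(p) := p/(1-p)$ is strictly increasing on $[0,1)$ with $g(1/2)=1$, inverting $g(p)=e^{\epsilon_1}$ and $g(p)=e^{-\epsilon_1}$ yields the feasible set $\cF = \big[\tfrac{1}{e^{\epsilon_1}+1},\,\tfrac{e^{\epsilon_1}}{e^{\epsilon_1}+1}\big]$, which is a nonempty closed interval because $\epsilon_1 \ge 0$ (in particular $p = 1/2 \in \cF$). Intersecting with the box $0 \le p \le 1$ leaves the same interval, so the box constraint is inactive.

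The conclusion is then immediate from the monotonicity of an affine function over an interval: if $c \le 0$ the minimizer of $f$ over $\cF$ is the right endpoint $p^\star = \tfrac{e^{\epsilon_1}}{e^{\epsilon_1}+1}$, matching \eqref{4a}; if $c > 0$ the minimizer is the left endpoint $p^\star = \tfrac{1}{e^{\epsilon_1}+1}$, matching \eqref{eq4b}. In the degenerate case $c=0$ (i.e., $a$ equidistant from $q_i$ and $q_{i+1}$), $f$ is constant on $\cF$ so every feasible $p$ is optimal, and the value prescribed by \eqref{4a} remains a valid optimal solution; hence the stated closed form still holds.

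Since every step is elementary, I do not expect a genuine obstacle; the only items needing a sentence of care are (i) verifying that $\cF$ is nonempty and that the $0 \le p \le 1$ constraint is redundant, and (ii) observing that the ``$\le$'' in the case split of \eqref{4a} is merely a (harmless) tie-breaking convention in the $c=0$ case. I would also remark that this closed form sidesteps the simplex or interior-point solvers mentioned just before the lemma, so that $\cQ_{b,\epsilon_1}(a)$ can be implemented with negligible overhead.
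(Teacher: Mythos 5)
Your proposal is correct and follows essentially the same route as the paper's proof: rewrite the objective as an affine function of $p$, convert the DP constraint into the interval $\big[\tfrac{1}{e^{\epsilon_1}+1},\tfrac{e^{\epsilon_1}}{e^{\epsilon_1}+1}\big]$, and pick the endpoint according to the sign of $(q_i-a)^2-(q_{i+1}-a)^2$. Your explicit treatment of the tie case $|q_i-a|=|q_{i+1}-a|$ is a minor refinement (the paper's ``if and only if'' is slightly loose there), but it does not change the argument.
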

\begin{proof}
   See Appendix~\ref{ProofLM1}.
\end{proof}
Lemma~\ref{lm1} indicates that the optimal probabilities are determined by the $\epsilon_1$-DP constraint. 
Specifically, when $\epsilon_1 = 0$, representing the strongest possible DP guarantee, the proposed quantizer $\cQ_{b,0}(a)$ maps $a$ to either $q_i$ or $q_{i+1}$ with equal probability of $0.5$.
Conversely, as $\epsilon_1 \rightarrow \infty$, representing the weakest DP guarantee, the proposed quantizer $\cQ_{b,\infty}(a)$ maps $a$ to the nearest quantization level with probability $1$. 
\subsection{Quantization Distortion Analysis} \label{SecII.C}
Lemma~\ref{lm1} is instrumental in computing the minimum distortion achieved by the optimization problem \eqref{eq3}.  
Using the closed-form expression \eqref{eq4} and after some algebraic manipulation, the expected distortion in \eqref{eq3a} can be expressed as 
\vspace{-2mm}
\begin{multline}
    \label{eq:g_eps1}
    \d4\scalemath{0.9}{\E[|\cQ_{b,\ep_1}(a) - a|^2] =
    \frac{e^{\epsilon_1}\min\{(q_{i}-a)^2,(q_{i+1}-a)^2\}}{e^{\epsilon_1} + 1}} \\ ~~~\scalemath{0.9}{+ \frac{\max\{(q_{i}-a)^2,(q_{i+1}-a)^2\}}{e^{\epsilon_1} + 1}}.
\vspace{-2mm}
\end{multline}
\begin{remark}\label{rmk1} 
Compared to the LaplaceSQ mechanism, the proposed quantizer $\cQ_{b,\epsilon_1}(a)$ in Lemma~\ref{lm1} ensures an $\epsilon_1$-DP guarantee while achieving the minimal quantization distortion.  
To see this, we first define $\E[|\mathsf{L}(a) -a|^2]$ as the expected distortion of LaplaceSQ. 
It is not difficult to observe that as $\epsilon_1 \rightarrow 0$, the distortion $\E[|\mathsf{L}(a) -a|^2] = \E[|Q_b(a)-a|^2] + \frac{2\rho^2}{\epsilon^2_1}$ grows unbounded. 
On the other hand, given the same $\epsilon_1$-DP guarantee as the LaplaceSQ mechanism, the proposed quantizer $\cQ_{b,\epsilon_1}(a)$ in Lemma~\ref{lm1} maintains a bounded distortion as $\epsilon_1 \to 0$.
This property stems from the fact that the distortion in \eqref{eq:g_eps1} is a monotonically decreasing function of $\epsilon_1$, converging to its maximum value $\frac{(q_{i}-a)^2 + (q_{i+1}-a)^2}{2}$ as $\epsilon_1 \rightarrow 0$. \looseness=-1
\end{remark}
\section{Network Model and Privacy-Preserving Quantized FL Algorithm}\label{SecIV}
This section first presents our FL network model, which consists of an FC and multiple groups of distributed devices with heterogeneous quantization resolutions, and proposes the privacy-preserving quantized FL algorithm.
{Then, we analyze the DP guarantee of the proposed algorithm and design fusion weights for improved model aggregation.}
\begin{figure}[htb]
    \centering
    \includegraphics[width=0.45\textwidth]{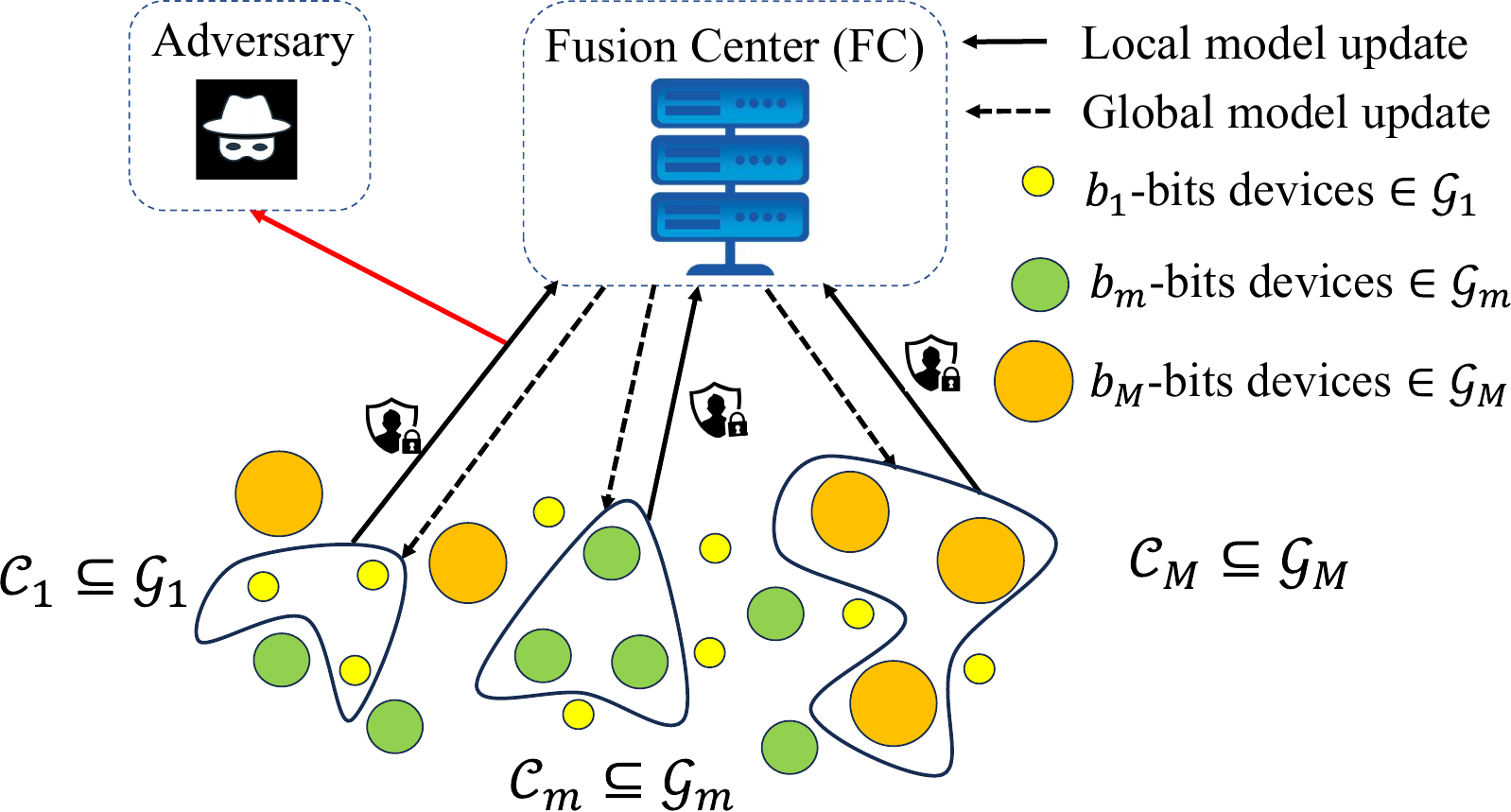}
     \vspace{-2mm}
    \caption{Privacy-preserving quantized FL network with quantization resolution heterogeneity across devices.}
    \label{fig:1}
\end{figure}
\subsection{Network Model}\label{SecIVA}
As shown in Fig.~\ref{fig:1}, our FL network incorporates heterogeneity in quantization resolution and consists of an FC and $M$ groups of distributed devices, $\cG_1, \dots, \cG_M$, where each group $\cG_m$ contains $|\cG_m| =g_m$ devices. 
Each device in $\cG_m$ uses $b_m$ bits for quantization with $1\leq b_1 \leq b_2 \leq \dots \leq b_M$. 
Specifically, $\cG_m = \{d_{m,1}, \dots, d_{m, g_m}\}$, where $d_{m,u}$ is the notation used to indicate the $u$th device in $\cG_m$.
The FC and the distributed devices collaboratively train a global ML model $\bw^{\star} \in \R^{d\times 1}$ through multiple FL rounds \cite{mcmahan2017} such that 
\vspace{-2mm}
\begin{equation}
    \label{eq1}
    \bw^{\star} = \argmin_{\bw} f(\bw), 
\vspace{-2mm}
\end{equation}where {\small $f(\bw) = \frac{1}{K}\sumM \sum_{u=1}^{g_m} f_{m,u}(\bw)$} is the global loss function, {\small $K = \sumM g_m$} is the total number of devices, and $f_{m,u}(\cdot)$ is the local loss function of device $d_{m,u}$. 
Device $d_{m,u}$ utilizes its data set $\cD_{m,u}$ to locally train the ML model and then sends its model update to the FC.
We assume that elements in $\{\cD_{m,u}\}$ are i.i.d. across devices with $|\cD_{m,u}| = D$.  
The local loss function $f_{m,u}(\bw)$ is given by $f_{m,u}(\bw) = \frac{1}{D}\sum_{x \in \cD_{m,u}} \mathcal{\ell}(\bw,x)$, where $\mathcal{\ell}(\cdot)$ is the training loss function. 

 Due to the limited bandwidth of the device-to-FC links, only a cluster of devices $\cC_m \subseteq \cG_m$ within a group is allowed to join each FL round, where $|\cC_m| = c_m \leq g_m$ and $\sumM c_m = N \leq K$. 
This means the sum data rate of the model transfer from the devices to the FC is limited by $B$ bits, 
\vspace{-2mm}
\begin{equation}\small
    \label{eq:defB}
    \sumM c_m b_m \leq B.
\vspace{-2mm}
\end{equation} 
Devices in cluster $\cC_m$ are obtained by uniformly sampling $\cG_m$ with a probability of $\frac{1}{g_m}$.
We consider a threat model in which an adversary can eavesdrop on model updates from devices, as illustrated in Fig.~\ref{fig:1}, and deploy model inversion attacks \cite{Jonas2020, Zhu2019, huang2021} to reconstruct the local training data without altering the FL operations.
These attacks are particularly difficult to detect due to their passive and nonintrusive nature.

\begin{algorithm}[htp]
   \caption{Privacy-Preserving Quantized FL with Heterogeneous Quantization Resolution}  
   \label{algorithm1}
\begin{algorithmic}[1]
 \renewcommand{\algorithmicensure}{\textbf{Output:}}
 \REQUIRE $\{\cG_m\}_{m=1}^{M}$, $\{\eta_t\}_{t=0}^{T-1}$, $T$, $L$, $C$, $\{\epsilon_{1,m,u}\}$  
 \ENSURE $\bw^{T}$  
 \STATE \textbf{Initialization:} $\bw^{0}$ and $t = 0$
    \WHILE{$t < T$}   
    \STATE{FC randomly selects $\cC_m \subseteq \cG_m$, for $m=1,\dots,M$}\label{step3}
    \STATE{FC sends $\bw^t$ to devices in $\cC_m$, for $m=1,\dots,M$}\label{step4}
    \FOR{$d_{m,u} \in \cC_m$}
    \STATE{Local model update: setting $\bw^{0,m,u,t} = \bw^t$}\label{step6}
    \FOR{$l = 0,1,\dots,L-1$} 
    \STATE{$\bw^{l+1,m,u,t} = \bw^{l,m,u,t} - \eta_t\widetilde{\nabla}f_{m,u}(\bw^{l,m,u,t})$}\label{step8}
    \ENDFOR \label{step9}  
    \STATE{Model difference:}{ $\bv^{m,u,t} = \bw^{L,m,u,t}-\bw^{t}$}\label{step10}
    \STATE{Clipping: {\small $\widehat{\bv}^{m,u,t} = \min\left\{1,\frac{C}{\| \bv^{m,u,t}\|_1} \right\} \bv^{m,u,t}$}} \label{step11}
    \STATE{Stochastic quantization:} 
    {{\small $\widehat{\bw}^{m,u,t} = \cQ_{b_m,\epsilon_{1,m,u}}(\widehat{\bv}^{m,u,t})$}, where $\cQ_{b_m,\epsilon_{1,m,u}}$ is proposed in \eqref{eq4}} \label{step12}  
    \STATE{Send $\widehat{\bw}^{m,u,t}$ to the FC}  \label{step13}
    \ENDFOR 
    \STATE{Model fusion: }{ {\small $\bw^{t+1} = \bw^t + \sumM \sum_{d_{m,u} \in \cC_m} \omega_{m,u}(\widehat{\bw}^{m, u, t} + \bn^{m,u,t})$}}, where $\{\omega_{m,u}\}$ are fusion weights (see \eqref{eq:fusionWOpt} in Section~\ref{secFWOpt}).\label{step15}
    \STATE{$t = t+1$}
    \ENDWHILE
\end{algorithmic}
\end{algorithm}
\subsection{Privacy-Preserving Quantized FL Algorithm with Heterogeneous Quantization Resolution}
In what follows, we describe our proposed privacy-preserving FL algorithm with quantization heterogeneity.
It operates over $T$ learning rounds. 
Our approach is designed to ensure that even if an adversary intercepts the model updates, it cannot reconstruct the local training data. 
{The algorithm is presented in Algorithm~\ref{algorithm1}.}

At the $t^{\text{th}}$ learning round, the FC randomly selects devices to form $\{\cC_m\}_{m=1}^{M}$\footnote{The sets $\{\cC_m\}_{m=1}^{M}$ vary with each FL round $t$. For the ease of exposition, we omit $t$.} and sends the global model $\bw^t \in \R^{d \times 1}$ to $\{\cC_m\}_{m=1}^{M}$ as described in Steps \ref{step3} and \ref{step4} of Algorithm~\ref{algorithm1}. 
Then, each device $d_{m,u} \in \cC_m$ updates $\bw^t$ using its local data $\cD_{m,u}$ through $L$ iterations of mini-batch stochastic gradient descent (SGD) with the learning rate $\eta_t$, resulting in $\bw^{L,m,u,t}$ (Steps \ref{step6}-\ref{step9}).  
Herein, the mini-batch stochastic gradient $\Wnabla f_{m,u}(\bw^{l,m,u,t})$, based on a mini-batch $\cB_{l,m,u,t} \subseteq \cD_{m,u}$, is obtained by $\Wnabla f_{m,u}(\bw^{l,m,u,t}) = \frac{1}{|\cB_{l,m,u,t}|}\sum_{x\in \cB_{l,m,u,t}} \nabla \mathcal{\ell}(\bw^{l,m,u,t},x)$\cite{li2014MB}.
Each device computes the model difference $ \bv^{m,u,t} = \bw^{L,m,u,t}-\bw^t$ (Step \ref{step10}) and applies $\ell_1$-norm clipping to obtain $\widehat{\bv}^{m,u,t}$ with a clipping constant $C$ (Step \ref{step11}), stabilizing the training process \cite{Abadi2016}. 
In Step \ref{step12}, $\widehat{\bv}^{m,u,t}$ is quantized using the proposed SQ in \eqref{eq4} leading to $\widehat{\bw}^{m,u,t} = \cQ_{b_m,\epsilon_{1,m,u}}(\widehat{\bv}^{m,u,t})$, ensuring $\epsilon_{1,m,u}$-DP guarantee. 
Applying Lemma~\ref{lm:composition1} to the proposed SQ $\cQ_{b_m, \epsilon_{1,m,u}}(\cdot)$, the quantized output $\widehat{\bw}^{m,u,t} \in \R^{d \times 1}$ in Step \ref{step12} satisfies $\epsilon_{m,u}$-DP with
\vspace{-2mm}
\begin{equation}\label{eq:overallDP} 
\epsilon_{m,u} = d\epsilon_{1,m,u}. 
\vspace{-2mm}
\end{equation}
Next, the local update $\widehat{\bw}^{m,u,t}$ is sent to the FC for model fusion (Step \ref{step13}). 
The FC receives $(\widehat{\bw}^{m,u,t} + \bn^{m,u,t}) \in \R^{d \times 1}$ from device $d_{m,u}$, where $\bn^{m,u,t} \sim \cN(\mathbf{0},\sigma_{m,u}^2\bI)$ is additive white Gaussian noise on the link  between $d_{m,u}$ and the FC \cite{CongShen2022}.
Then, the FC updates the global model to {\small $\bw^{t+1} = \bw^t + \sumM \sum_{d_{m,u} \in \cC_m} \omega_{m,u}(\widehat{\bw}^{m,u,t}+\bn^{m,u,t})$} as in Step \ref{step15}, where the fusion weights $\{\omega_{m,u}\}$ satisfy $\sumM \sum_{u \in \cC_m}\omega_{m,u} = 1$ and $\omega_{m,u} \geq 0, \forall d_{m,u} \in \cC_m$. 
The fusion weights $\{\omega_{m,n}\}$ are optimized in the next subsection.
\subsection{Fusion Weight Design}\label{secFWOpt}
In this subsection, we optimize the fusion weights to account for quantization resolution heterogeneity and link noise.
We first introduce a useful inequality for our analysis. 
Given $\zeta_1, \dots, \zeta_n \in \R$ and $\bx_1,\dots,\bx_n \in \R^{d \times 1}$, the following holds
\begin{equation}\label{eqCS}
\scalemath{0.8}{\left \|\sum_{i=1}^{n} \zeta_i\bx_i \right \|^2_2 \leq \left(\sum_{i=1}^{n} \zeta_i^2\right)\left(\sum_{i=1}^{n}\|\bx_i\|^2_2\right).}
\end{equation}
The result in \eqref{eqCS} is obtained by using the Cauchy-Schwarz inequality: {\small$\|\sum_{i=1}^{n} \zeta_i\bx_i \|^2_2 = \sum_{i,j} \zeta_i\zeta_j \lL \bx_i,\bx_j \lR \leq \sum_{i,j} |\zeta_i||\zeta_j| \|\bx_i\|_2\|\bx_j\|_2 = (\sum_{i=1}^{n}|\zeta_i|\|\bx_i\|_2)^2 \leq \left(\sum_{i=1}^{n} \zeta_i^2\right)\left(\sum_{i=1}^{n}\|\bx_i\|^2_2\right)$.}
We now define the ideal aggregated global model as {\small $\bw^{t+1,\star} = \bw^t + \sumM \sum_{d_{m,u} \in \cC_m} \omega_{m,u}\widehat{\bv}^{m, u, t}$}, which excludes the effects of the SQ (Step \ref{step12}) and noise {\small $\{\bn^{m,u,t}\}$}.
Our objective is to design $\{\omega_{m,u}\}$ to minimize the expected fusion error {\small $\E[\|\bw^{t+1} - \bw^{t+1,\star}\|_2^2]$}, which can be expressed as {\small $\E[\|\bw^{t+1} \!-\! \bw^{t+1,\star}\|_2^2] = \E[\|\sumM \sum_{d_{m,u} \in \cC_m} \omega_{m,u}(\cQ_{b_m, \epsilon_{1,m,u}}(\widehat{\bv}^{m,u,t}) \!-\! \widehat{\bv}^{m,u,t}  + \bn^{m,u,t})\|_2^2] \leq N \sumM \sum_{d_{m,u} \in \cC_m}\E[\|\omega_{m,u}(\cQ_{b_m, \epsilon_{1,m,u}}(\widehat{\bv}^{m,u,t}) - \widehat{\bv}^{m,u,t}  + \bn^{m,u,t})\|_2^2],$} where the last bound follows from \eqref{eqCS}.
From $\scalemath{0.9}{\E[\bn^{m,u,t}] =\mathbf{0}}$ and the fact that $\scalemath{0.9}{\{\bn^{m,u,t}\}}$ and $\scalemath{0.9}{\{\cQ_{b_m, \epsilon_{1,m,u}}(\widehat{\bv}^{m,u,t}) - \widehat{\bv}^{m,u,t}\}}$ are independent, we have $\E[\|\omega_{m,u}(\cQ_{b_m, \epsilon_{1,m,u}}(\widehat{\bv}^{m,u,t}) - \widehat{\bv}^{m,u,t}  + \bn^{m,u,t})\|_2^2] = \omega^2_{m,u}\big(\E\big[ \|\cQ_{b_m, \epsilon_{1,m,u}}(\widehat{\bv}^{m,u,t}) \!-\! \widehat{\bv}^{m,u,t}\|_2^2 \big]\! +\!  \E\big[ \|\bn^{m,u,t}\|^2 \big]\big)$. Thus, the following holds \looseness=-1
\begin{equation}\label{eq:feBound}
\begin{aligned}
    &\d4 \scalemath{0.9}{\E \left[ \left\|\bw^{t+1} - \bw^{t+1,\star}\right\|_2^2 \right] } \\ 
    &\d4\scalemath{0.9}{ \leq \! N \!\! \sumM \! \sum_{d_{m,u} \in \cC_m}\!\!\!\!\!\omega^2_{m,u} \left( \E \left[ \|\cQ_{b_m, \epsilon_{1,m,u}}(\widehat{\bv}^{m,u,t}) \!-\! \widehat{\bv}^{m,u,t}\|_2^2 \right] \! +\!  d\sigma^2_{m,u} \right) }. 
\end{aligned}
\end{equation} 
Defining the effective signal-to-noise ratio (SNR) of $d_{m,u} \in \cC_m$ as $\theta_{m,u} = \frac{1}{\E[\|\cQ_{b_m, \epsilon_{1,m,u}}(\widehat{\bv}^{m,u,t}) - \widehat{\bv}^{m,u,t}\|^2_2] + d\sigma^2_{m,u}}$, reflecting the combined effects of quantization distortion and noise power, the fusion weights are optimized, in our approach, by minimizing the upper bound in \eqref{eq:feBound}, \looseness=-1
\begin{subequations}\small\label{eq:fusionOpt2}
    \beq
    \{\omega_{m,u}^{\star}\}&=& \argmin_{\{\omega_{m,u}\}} \sumM \sum_{d_{m,u} \in \cC_m} \frac{\omega_{m,u}^2}{\theta_{m,u}}, \label{eq:fusionOpt2a}\\ 
    \text{subject to} && \sumM \sum_{d_{m,u} \in \cC_m}  \omega_{m,u} = 1,\label{eq:fusionOpt2b} \\ 
    && \omega_{m,u} \geq 0, \forall d_{m,u}.\label{eq:fusionOpt2c}
    \eeq 
\end{subequations}
Applying the Cauchy-Schwarz inequality to the constraint in \eqref{eq:fusionOpt2b} yields $1 = \big( \sumM \sum_{d_{m,u} \in \cC_m}  \omega_{m,u} \big)^2 \leq \big(\sumM  \! \sum_{d_{m,u} \in \cC_m} \! \frac{\omega_{m,u}^2}{\theta_{m,u}}\big)\big(\sumM \! \sum_{d_{m,u} \in \cC_m} \! \theta_{m,u}\big)$, where the  equality holds if and only if \looseness=-1
\begin{equation}\label{eq:fusionWOpt} \omega_{m,u} = \frac{\theta_{m,u}}{\sum_{m}\sum_{d_{m,u'}\in \cC_{m}} \theta_{m,u'}}, \end{equation}
which is the optimal solution to \eqref{eq:fusionOpt2}.
The solution in \eqref{eq:fusionWOpt} implies that the fusion weights $\{\omega_{m,u}\}$ are assigned proportionally to the effective SNR $\theta_{m,u}$. 
It assigns larger fusion weights to devices with larger $\theta_{m,u}$, which contribute more reliable model updates, and vice versa. 
In contrast to prior fusion weight designs in \cite{CongShen2021, Yujia2022}, the proposed fusion weights in \eqref{eq:fusionWOpt} account for the combined effects of heterogeneous quantization resolution, DP requirements, and link noise between the FC and devices.


\section{Convergence Analysis and Cluster Sizes Optimization}\label{SecV}
In this section, we derive an upper bound on the learning error of Algorithm~\ref{algorithm1}.
Our analysis follows the conventional framework established in prior works \cite{Hassani2019, Chang2021}, with a key distinction: we explicitly characterize the combined effect of the quantization resolution heterogeneity, link noise, and cluster sizes on the convergence behavior of the proposed algorithm.
We then optimize the cluster sizes $\{c_m\}_{m=1}^{M}$ to minimize the learning error upper bound.
\subsection{Learning Error Bound Analysis}
We adopt standard assumptions from the FL literature \cite{Hassani2019}.\looseness=-1
\begin{assumption}\label{assumption1}($\gamma$-smooth \cite{Nesterov2014}) 
    The local loss function $f_{m,u}(\cdot)$ is convex, differentiable, and $\gamma$-smooth ($\gamma >0$ is a fixed  constant), satisfying $\|\nabla f_{m,u}(\bx) - \nabla f_{m,u}(\by)\|_2 \leq \gamma \|\bx-\by\|_2$, $\forall d_{m, u}$ and $\forall \bx, \by \in \R^{d \times 1}$. 
    It equivalently holds that $f_{m,u}(\by) \leq f_{m,u}(\bx) + \lL \nabla f_{m,u}(\bx),\by-\bx \lR + \frac{\gamma}{2}\|\by-\bx\|^2_2$.
\end{assumption}
\begin{assumption}\label{assumption2} ($\mu$-strong convex \cite{Nesterov2014}) The local loss function $f_{m,u}(\cdot)$ is $\mu$-strong convex ($\mu > 0$ is a fixed constant), satisfying $\lL \nabla f_{m,u}(\bx) - \nabla f_{m,u} (\by),\bx-\by \lR \geq \mu\|\bx-\by\|^2_2$,  $\forall d_{m, u}$ and $\forall \bx, \by \in \R^{d \times 1}$.
\end{assumption}
\begin{assumption}\label{assumption3} (Unbiasedness and boundedness \cite{Hassani2019}) 
    The stochastic gradient at each device is an unbiased estimator of the global gradient, i.e., $\E[\wnabla f_{m,u}(\bx)] = \nabla f(\bx)$, with a bounded variance $\E[\|\wnabla f_{m,u}(\bx)-\nabla f(\bx)\|^2_2] \leq \zeta$, $\forall d_{m,u}$, where $\zeta > 0$ is a fixed constant.
\end{assumption}
\begin{assumption}\label{assumption4} (Clipping error boundedness \cite{koloskova2023revisiting}) 
Given $\bx \in \R^{d \times 1}$ and a positive constant $C$, the clipping error is $b(\bx,C) = \big\|\bx - \min\big\{1, \frac{C}{\|\bx\|_{1}}\big\} \bx \big\|^2_2$. 
Then, the clipping error is bounded by $b(\bx,C) \leq \Lambda\|\bx\|_2^2$, where $\Lambda >0$ is a fixed constant.
\end{assumption}
The bias introduced by the clipping in Step \ref{step11} and the SQ in Step \ref{step12} of Algorithm~\ref{algorithm1} makes the convergence analysis of the proposed FL difficult \cite{demidovich2023guide, beznosikov2023biased}. 
To regulate such a bias, we adopt the following assumption.
\begin{assumption}\label{assumption5} (Bounded bias \cite{demidovich2023guide}) 
Suppose $\ddot{\bw} \in \R^{d \times 1}$ is a biased estimate of a model $\bw \in \R^{d \times 1}$. 
Then, there exists a positive constant $\vartheta$ such that $\E[\lL \ddot{\bw} - \bw, \bw - \wst \lR] \leq \vartheta$, where $(\ddot{\bw} - \bw)$ is the bias and $(\bw - \wst)$ is the sub-optimality gap.
\end{assumption}
\noindent 
Assumption~\ref{assumption5} quantifies the cross-correlation between the bias and sub-optimality gap.
We now derive an upper bound on the expected learning error $\E[\|\bw^{t+1} - \bw^{\star}\|^2_2]$ of Algorithm~\ref{algorithm1}. 
We begin with establishing an intermediate inequality, which is useful for our derivations. 
Define ${\whw}^{t+1} = \bw^t + \sumM \sum_{d_{m,u} \in \cC_m}\omega_{m,u}\widehat{\bw}^{m,u,t}$ and $\bbw^{t+1} = \sumM \sum_{d_{m,u}\in \cC_m} \omega_{m,u}\bw^{L,m,u,t}$. 
It is not difficult to observe that $\overline{\bw}^{t+1} = \bw^t + \sumM\sum_{d_{m,u} \in \cC_m}\omega_{m,u}\bv^{m,u,t}$. 
Thus, ${\whw}^{t+1}$ is a biased estimate of $\overline{\bw}^{t+1}$ because $\widehat{\bw}^{m,u,t}$ is obtained by quantizing and clipping $\bv^{m,u,t}$ (Steps \ref{step11} and \ref{step12}).
We have the following lemma about the expected learning error. \looseness = -1
\begin{lemma}\label{lm:equality1}
For the $(t+1)^{\text{th}}$ FL round, the expected learning error $\E[\|\bw^{t+1} - \bw^{\star}\|^2_2]$ satisfies  
\begin{multline} 
\label{eq:ub1}
\d4\E[\|\bw^{t+1}-\bw^{\star}\|^2_2] \leq  \E[\|\bw^{t+1}-\whw^{t+1}\|^2_2] +  \E[\|\whw^{t+1}-\bbw^{t+1}\|^2_2]\\+\E[\|\bbw^{t+1}-\wst\|^2_2] + 2\vartheta,
\end{multline}
where $\vartheta$ is defined in Assumption~\ref{assumption5}. 
\begin{proof}
    From Step \ref{step15} of Algorithm~\ref{algorithm1} and noting that $\E[\bn^{m,u,t}] = \mathbf{0}, \forall d_{m,u} \in \cC_m$, the following holds $\E_{\{\bn^{m,u,t}\}}[\bw^{t+1}] = \bw^{t} + \sumM \sum_{d_{m,u} \in \cC_m}\omega_{m,u}\widehat{\bw}^{m,u,t}$. Thus, we have
\begin{equation}
\begin{aligned}
    \E_{\{\bn^{m,u,t}\}}[\bw^{t+1}] = \whw^{t+1}.\label{eq:lmeq1d}
\end{aligned}
\end{equation}
Next, $\E[\|\bw^{t+1}-\bw^{\star}\|^2_2] = \E[\|\bw^{t+1}\!-\!\whw^{t+1}\|^2_2] \!+\! 2 \E \left[\lL \bw^{t+1}\!-\!\whw^{t+1}, \whw^{t+1} -\wst\lR\right] \!+\! \E[\|\whw^{t+1} -\wst\|^2_2] = \E[\|\bw^{t+1}\!-\!\whw^{t+1}\|^2_2] + \! \E[\|\whw^{t+1} -\wst\|^2_2]$, where the last equality follows from the equality $\E_{\{\bn^{m,u,t}\}} \big[\lL \bw^{t+1}\!-\!\whw^{t+1}, \whw^{t+1} -\wst\lR\big] = \lL \E_{\{\bn^{m,u,t}\}} \big[\bw^{t+1}\!-\!\whw^{t+1}\big], \whw^{t+1} -\wst\lR$ by also noting  \eqref{eq:lmeq1d}. 
Then, $\E[\|\bw^{t+1}-\bw^{\star}\|^2_2] = \E[\|\bw^{t+1}-\whw^{t+1}\|^2_2] + \E[\|\whw^{t+1}-\bbw^{t+1}\|^2_2] + 2\E[\lL \whw^{t+1}-\bbw^{t+1}, \bbw^{t+1}-\wst \lR] +\E[\|\bbw^{t+1}-\wst\|^2_2].$
Therefore, the following holds \sloppy \looseness=-1
\begin{equation}\label{eq:equality1b}
\begin{aligned}
\d4\d4\d4\d4\d4&\E[\|\bw^{t+1}-\bw^{\star}\|^2_2] \leq \E[\|\bw^{t+1}-\whw^{t+1}\|^2_2]  + \\
&~~~~~~~~\E[\|\whw^{t+1}-\bbw^{t+1}\|^2_2]+\E[\|\bbw^{t+1}-\wst\|^2_2] + 2\vartheta,
\end{aligned}
\end{equation}
where \eqref{eq:equality1b} is due to Assumption~\ref{assumption5} and the fact that $\whw^{t+1}$ is a biased estimate of $\overline{\bw}^{t+1}$.
This completes the proof.
\end{proof}
\end{lemma}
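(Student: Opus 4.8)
The plan is to decompose the error $\|\bw^{t+1}-\bw^\star\|_2^2$ through the two intermediate points $\whw^{t+1}$ and $\bbw^{t+1}$, expanding the squared norm by inserting $\pm\whw^{t+1}$ and then $\pm\bbw^{t+1}$, and to kill each cross term either by the zero-mean property of the link noise or by Assumption~\ref{assumption5}. First I would use Step~\ref{step15} of Algorithm~\ref{algorithm1}: since $\bw^{t+1}=\whw^{t+1}+\sum_{m}\sum_{d_{m,u}\in\cC_m}\omega_{m,u}\bn^{m,u,t}$ and $\E[\bn^{m,u,t}]=\mathbf{0}$, taking the conditional expectation over $\{\bn^{m,u,t}\}$ gives $\E_{\{\bn^{m,u,t}\}}[\bw^{t+1}]=\whw^{t+1}$, which is exactly \eqref{eq:lmeq1d}.

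Next I would write
$\|\bw^{t+1}-\bw^\star\|_2^2=\|\bw^{t+1}-\whw^{t+1}\|_2^2+2\lL\bw^{t+1}-\whw^{t+1},\whw^{t+1}-\wst\lR+\|\whw^{t+1}-\wst\|_2^2$,
take expectations, and observe that $\whw^{t+1}-\wst$ is deterministic given the randomness up to and including the quantization/clipping (it does not depend on $\{\bn^{m,u,t}\}$), so the cross term vanishes after conditioning on $\{\bn^{m,u,t}\}$ via \eqref{eq:lmeq1d}. This leaves $\E[\|\bw^{t+1}-\whw^{t+1}\|_2^2]+\E[\|\whw^{t+1}-\wst\|_2^2]$. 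Then I would repeat the same expansion on the second term, inserting $\pm\bbw^{t+1}$:
$\|\whw^{t+1}-\wst\|_2^2=\|\whw^{t+1}-\bbw^{t+1}\|_2^2+2\lL\whw^{t+1}-\bbw^{t+1},\bbw^{t+1}-\wst\lR+\|\bbw^{t+1}-\wst\|_2^2$.

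The remaining cross term $2\E[\lL\whw^{t+1}-\bbw^{t+1},\bbw^{t+1}-\wst\lR]$ is the one that cannot be made to vanish, because $\whw^{t+1}$ is a \emph{biased} estimate of $\bbw^{t+1}=\overline{\bw}^{t+1}$ (the clipping in Step~\ref{step11} and the stochastic quantization in Step~\ref{step12} introduce bias, not just variance). Here I would invoke Assumption~\ref{assumption5} with $\ddot{\bw}=\whw^{t+1}$, $\bw=\bbw^{t+1}=\overline{\bw}^{t+1}$, and the sub-optimality gap $\bbw^{t+1}-\wst$, bounding this inner product by $\vartheta$, hence the cross term by $2\vartheta$. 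Assembling the three pieces yields \eqref{eq:ub1}.

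The only genuinely delicate point — and the one I would be most careful about — is the claim that the first cross term vanishes, i.e., that $\whw^{t+1}-\wst$ is measurable with respect to the $\sigma$-algebra against which we condition out $\{\bn^{m,u,t}\}$. This is true here because $\whw^{t+1}$ is built from $\bw^t$ and the quantized/clipped updates $\{\widehat{\bw}^{m,u,t}\}$, which are generated before the link noise is added, and $\wst$ is a fixed deterministic quantity; so conditioning on everything except $\{\bn^{m,u,t}\}$ makes $\whw^{t+1}-\wst$ deterministic and lets $\E_{\{\bn^{m,u,t}\}}$ pass through the inner product onto $\bw^{t+1}-\whw^{t+1}$, whose conditional mean is $\mathbf{0}$ by \eqref{eq:lmeq1d}. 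Everything else is a routine completion-of-the-square and a single application of Assumption~\ref{assumption5}.
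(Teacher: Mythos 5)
Your proposal is correct and follows essentially the same route as the paper's proof: the same two nested expansions through $\whw^{t+1}$ and $\bbw^{t+1}$, the first cross term eliminated via $\E_{\{\bn^{m,u,t}\}}[\bw^{t+1}]=\whw^{t+1}$, and the second bounded by $2\vartheta$ through Assumption~\ref{assumption5} with $\ddot{\bw}=\whw^{t+1}$ and $\bw=\bbw^{t+1}$. Your explicit measurability justification for why the first cross term vanishes is a point the paper states only implicitly, and it is a welcome clarification rather than a deviation.
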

From Lemma~\ref{lm:equality1}, we proceed to derive upper bounds for $\E[\|\whw^{t+1}-\bbw^{t+1}\|^2_2]$, $\E[\|\bw^{t+1}-\whw^{t+1}\|^2_2]$, and $\E[\|\bbw^{t+1}-\wst\|^2_2]$ on the right-hand-side (r.h.s) of \eqref{eq:ub1}.
To this end, we define a sequence of vectors $\{\bb^{l,t}\}_{l=0}^{L}$ \cite{Hassani2019} using gradient descent iterations in terms of the gradient of the global loss function $f$ defined in \eqref{eq1} with $\bb^{0,t} = \bw^{t}$,
\begin{equation}
\label{eq:defb}
     \bb^{l+1,t} =\bb^{l,t} - \eta_t\nabla f(\bb^{l,t}), \text{for } l =0,\dots,L-1. 
\end{equation} where $\eta_t$ is the learning rate in Step \ref{step8} of Algorithm~\ref{algorithm1}.
We also define intermediate variables: gradient error {\small $\be^{l,t} = \sumM\sum_{d_{m,u} \in \cC_m}\omega_{m,u} [\widetilde{\nabla}f_{m,u}(\bw^{l,m,u,t}) - \nabla f(\bb^{l,t})]$}, cumulative gradient error $\be^{t} = \sum_{l=0}^{L-1}\be^{l,t}$,  gradient sum $\bg^t = \sum_{l=0}^{L-1}\nabla f(\bb^{l,t})$, and deviation term\sloppy
\begin{equation}\label{eq:defat}
    a^{l,t} = \frac{1}{K}\sumM\sum_{d_{m,u} \in \cG_m} \E[\| \bw^{l,m,u,t} -  \bb^{l,t}\|^2_2].  
\end{equation}
Given the definitions in \eqref{eq:defb} and \eqref{eq:defat}, and for $\eta_t \leq \min\{\frac{1}{\mu},\frac{\mu}{\gamma^2}\}$, we have \cite{Hassani2019}, 
\begin{subequations}
    \beq
    \|\bb^{l,t} - \bw^{\star}\|^2_2  & \leq & (1-\mu \eta_t)^l\|\bw^{t} - \bw^{\star}\|^2_2, \label{T1_eq6} \\  
     a^{l,t} &\leq& \eta_t^2 L \zeta (1+L\eta^2_t\gamma^2)^{l-1}.  \label{T1_eq10}   
    \eeq 
\end{subequations}
Since a part of the following derivation is inspired by prior works \cite{Hassani2019}, we present the features that are unique and refined in this work, while relegating those standard derivations to the supplement document in \cite{Qua2025Supp}. 
We now readily bound the error term $\E \big[ \| \whw^{t+1} - \overline{\bw}^{t+1} \|^2_2 \big]$ on the r.h.s of \eqref{eq:ub1}.
\begin{lemma}\label{lm:2ndTerm}Suppose Assumptions~\ref{assumption1}-\ref{assumption4} hold and the learning rate $\eta_t \leq \min\{\frac{1}{\mu}, \frac{1}{L\gamma}, \frac{\mu}{\gamma^2}\}$, where $\gamma$ and $\mu$ are defined in Assumption~\ref{assumption1} (smoothness) and Assumption~\ref{assumption2} (strong convexity),  respectively. 
Then, the following holds
\begin{multline}\small
\label{eq:ub3}
\d4\!\! \E \big[ \| \whw^{t+1} - \overline{\bw}^{t+1} \|^2_2 \big] \leq  4\Lambda N \eta_t^2  L^2\gamma^2 \E[\|\bw^t - \bw^{\star}\|^2_2]\\\d4\d4~~~~~ + 8 d C^2 \!\!\sumM\!\frac{c_m}{(2^{b_m}-1)^2} \! + \! 4\Lambda K \eta_t^2(L\zeta \!+\! L^3\gamma^2\zeta\eta_t^2e),\!\!\!\!\!
\end{multline}where $\zeta$ is  defined in Assumption~\ref{assumption3} (bounded variance) and $\Lambda$ is defined in Assumption~\ref{assumption4} (clipping error boundedness).
\end{lemma}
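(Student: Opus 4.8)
The plan is to expand $\whw^{t+1}-\overline{\bw}^{t+1}$ into a term capturing the clipping bias and a term capturing the stochastic quantization distortion, and then bound each. First I would write $\widehat{\bw}^{m,u,t} = \cQ_{b_m,\epsilon_{1,m,u}}(\widehat{\bv}^{m,u,t})$ and decompose
\[
\whw^{t+1}-\overline{\bw}^{t+1} = \sumM \sum_{d_{m,u}\in\cC_m}\omega_{m,u}\big(\cQ_{b_m,\epsilon_{1,m,u}}(\widehat{\bv}^{m,u,t}) - \widehat{\bv}^{m,u,t}\big) + \sumM \sum_{d_{m,u}\in\cC_m}\omega_{m,u}\big(\widehat{\bv}^{m,u,t} - \bv^{m,u,t}\big),
\]
since $\overline{\bw}^{t+1} = \bw^t + \sumM\sum_{d_{m,u}\in\cC_m}\omega_{m,u}\bv^{m,u,t}$. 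Applying the inequality $\|\bx+\by\|_2^2 \le 2\|\bx\|_2^2 + 2\|\by\|_2^2$ and then \eqref{eqCS} (with $\sum\omega_{m,u}^2 \le (\sum \omega_{m,u})^2 = 1$, or more crudely absorbing an $N$ factor from $\sum_m c_m = N$) splits the bound into a \emph{quantization-distortion} part and a \emph{clipping-bias} part.

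For the quantization-distortion part, I would invoke the per-coordinate distortion formula \eqref{eq:g_eps1}: for a source $a\in[q_i,q_{i+1})$ the distortion is at most $\max\{(q_i-a)^2,(q_{i+1}-a)^2\} \le (q_{i+1}-q_i)^2 = \big(\tfrac{\overline a-\underline a}{2^{b_m}-1}\big)^2$, and since the clipped input $\widehat{\bv}^{m,u,t}$ has $\ell_1$-norm at most $C$ each coordinate lies in an interval of width $\tfrac{2C}{2^{b_m}-1}$ (the dynamic range is $[-C,C]$), giving $\E[\|\cQ_{b_m,\epsilon_{1,m,u}}(\widehat{\bv}^{m,u,t}) - \widehat{\bv}^{m,u,t}\|_2^2] \le \tfrac{dC^2}{(2^{b_m}-1)^2}$ up to a constant. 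Summing over $d_{m,u}\in\cC_m$ (with $|\cC_m|=c_m$) and folding in the factors of $2$ and the $N$ from \eqref{eqCS} produces the $8dC^2\sum_m \tfrac{c_m}{(2^{b_m}-1)^2}$ term.

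For the clipping-bias part, I would use Assumption~\ref{assumption4}: $\|\widehat{\bv}^{m,u,t}-\bv^{m,u,t}\|_2^2 = b(\bv^{m,u,t},C) \le \Lambda\|\bv^{m,u,t}\|_2^2$. The remaining work is to bound $\E[\|\bv^{m,u,t}\|_2^2] = \E[\|\bw^{L,m,u,t}-\bw^t\|_2^2]$ by relating $\bw^{L,m,u,t}$ to the reference sequence $\bb^{l,t}$: write $\bw^{L,m,u,t}-\bw^t = (\bw^{L,m,u,t}-\bb^{L,t}) + (\bb^{L,t}-\bw^t)$, use $a^{L,t} \le \eta_t^2 L\zeta(1+L\eta_t^2\gamma^2)^{L-1}$ from \eqref{T1_eq10} (with the bound $(1+L\eta_t^2\gamma^2)^{L-1}\le e$ when $\eta_t\le \tfrac{1}{L\gamma}$) for the first piece, and telescope $\bb^{L,t}-\bw^t = -\eta_t\sum_{l=0}^{L-1}\nabla f(\bb^{l,t})$ with $\gamma$-smoothness and \eqref{T1_eq6} for the second, yielding a term proportional to $\eta_t^2 L^2\gamma^2\E[\|\bw^t-\bw^\star\|_2^2]$ plus the gradient-variance leftover $\eta_t^2(L\zeta + L^3\gamma^2\zeta\eta_t^2 e)$. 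Collecting the $\Lambda$, $N$ (resp. $K$ after averaging the deviation term over all devices via \eqref{eq:defat}), and numerical constants gives the stated bound.

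The main obstacle I anticipate is the bookkeeping in the clipping-bias term: carefully tracking which summations run over clusters $\cC_m$ versus full groups $\cG_m$ (the deviation term $a^{l,t}$ is a $\tfrac1K$-average over $\cG_m$, so pulling $\E[\|\bw^{L,m,u,t}-\bb^{L,t}\|_2^2]$ out of the cluster sum forces the factor $K$ rather than $N$ in the last term), and ensuring the recursive bound $(1+L\eta_t^2\gamma^2)^{l-1}\le e$ is valid under exactly the stated step-size restriction $\eta_t\le\min\{\tfrac1\mu,\tfrac1{L\gamma},\tfrac{\mu}{\gamma^2}\}$. The quantization part is essentially mechanical once the dynamic-range observation is in place, so the proof hinges on getting the constants and the $N$-vs-$K$ distinction right.
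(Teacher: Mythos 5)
Your proposal is correct and follows essentially the same route as the paper's proof: the same decomposition into a quantization-distortion part (bounded via the clipped dynamic range $[-C,C]$ and the distortion formula \eqref{eq:g_eps1}, giving the $\frac{4dC^2}{(2^{b_m}-1)^2}$ per-device bound) and a clipping-bias part (Assumption~\ref{assumption4}), the same $\frac{c_m}{g_m}$ sampling bookkeeping that produces the $N$-versus-$K$ factors, and the same use of \eqref{T1_eq6} and \eqref{T1_eq10} with $(1+L\eta_t^2\gamma^2)^{L}\leq e$ under $\eta_t\leq\frac{1}{L\gamma}$. The only cosmetic difference is that you bound $\bv^{m,u,t}$ by splitting it as $(\bw^{L,m,u,t}-\bb^{L,t})+(\bb^{L,t}-\bw^t)$ and invoking $a^{L,t}$ directly, whereas the paper writes $\bv^{m,u,t}=-\eta_t(\bg^t+\overline{\be}^{m,u,t})$ and bounds the cumulative gradient error $\overline{\be}^{m,u,t}$; both yield the same structure with constants of the same order.
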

\begin{proof}
    See Appendix~\ref{AppTerm2}.
\end{proof}
\noindent  The term $\E [ \| \whw^{t+1} - \overline{\bw}^{t+1} \|^2_2]$ represents the expected $\ell_2$-norm of bias caused by the clipping (Step \ref{step11}) and stochastic quantization (Step \ref{step12}).
The upper bound for the expected $\ell_2$-norm of bias in \eqref{eq:ub3} consists of three terms. 
The first term captures the impact of the learning error $\E[\|\bw^{t} - \bw^{\star}\|^2_2]$ from the preceding FL round. 
The second term represents the upper bound of quantization distortion, which tends to $0$ as $b_m \rightarrow \infty$, $\forall m$. 
The last term reflects the impact of SGD variance, which decreases as the learning rate $\eta_t$ decreases.
For the term $\E[\|\bw^{t+1}-\whw^{t+1}\|^2_2]$ on the r.h.s of \eqref{eq:ub1}, we have the following lemma.
\begin{lemma}\label{lm:3rdTerm} For the $(t + 1)^{\text{th}}$ FL round, the following holds
\begin{equation}
    \label{eq:ub4}
\begin{aligned} \E[\|\bw^{t+1}-\whw^{t+1}\|^2_2] \leq d\sumM \sum_{d_{m,u} \in \cG_m} \frac{c_m}{g_m}\sigma_{m,u}^2.
\end{aligned}
\end{equation}
\end{lemma}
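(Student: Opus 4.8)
The plan is to exploit that $\bw^{t+1}$ and $\whw^{t+1}$ differ only by the aggregated link noise. Comparing Step~\ref{step15} of Algorithm~\ref{algorithm1} with the definition $\whw^{t+1} = \bw^t + \sumM \sum_{d_{m,u} \in \cC_m}\omega_{m,u}\widehat{\bw}^{m,u,t}$, the $\bw^t$ term and the quantized-update term cancel, leaving
\begin{equation}
\bw^{t+1} - \whw^{t+1} = \sumM \sum_{d_{m,u} \in \cC_m}\omega_{m,u}\bn^{m,u,t}.
\end{equation}
First I would condition on the cluster realizations $\{\cC_m\}$ together with all the model-dependent quantities that determine the fusion weights, so that each $\omega_{m,u}$ is a deterministic scalar and the only remaining randomness is the link noise. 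Since the $\bn^{m,u,t}\sim\cN(\mathbf{0},\sigma_{m,u}^2\bI)$ are mutually independent, zero-mean, and independent of the conditioning $\sigma$-algebra, all cross terms vanish and $\E[\|\bn^{m,u,t}\|_2^2]=d\sigma_{m,u}^2$, which yields the conditional identity $\E\big[\|\bw^{t+1}-\whw^{t+1}\|_2^2 \,\big|\, \{\cC_m\}\big] = d\sumM \sum_{d_{m,u}\in\cC_m}\omega_{m,u}^2\sigma_{m,u}^2$.

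Next I would drop the fusion weights with the crude estimate $0\le\omega_{m,u}\le1$ --- valid because, by Step~\ref{step15}, the weights are nonnegative and sum to one --- so $\omega_{m,u}^2\le1$ and the conditional bound becomes $d\sumM \sum_{d_{m,u}\in\cC_m}\sigma_{m,u}^2$. This step deliberately discards the sharper $\sum_{m,u}\omega_{m,u}^2\le1$ estimate, which is precisely why the target bound carries a full sum over $\cG_m$ rather than a maximum; keeping the crude estimate also removes any dependence on the (random) value of the weights, so no distributional information about $\{\omega_{m,u}\}$ is needed.

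Finally I would rewrite the cluster sum as a sum over the whole group against the membership indicator, $d\sumM \sum_{d_{m,u}\in\cC_m}\sigma_{m,u}^2 = d\sumM \sum_{d_{m,u}\in\cG_m}\indi_{\cC_m}(d_{m,u})\sigma_{m,u}^2$, and take the expectation over the uniform cluster sampling. Because $\cC_m$ is a uniformly random size-$c_m$ subset of the size-$g_m$ group $\cG_m$, every device obeys $\E[\indi_{\cC_m}(d_{m,u})]=\Pr[d_{m,u}\in\cC_m]=c_m/g_m$; the tower property then gives $\E[\|\bw^{t+1}-\whw^{t+1}\|_2^2]\le d\sumM \sum_{d_{m,u}\in\cG_m}\frac{c_m}{g_m}\sigma_{m,u}^2$, which is the claim. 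I do not expect a genuine obstacle here: the only points that require care are invoking the independence of the link noise from both the cluster sampling and the fusion weights when separating the expectations via conditioning, and converting the cluster sum into an indicator sum over $\cG_m$ before averaging so that the per-device inclusion probability $c_m/g_m$ factors out cleanly.
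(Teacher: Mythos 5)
Your proposal is correct and follows essentially the same route as the paper's Appendix~C: cancel to get $\bw^{t+1}-\whw^{t+1}=\sumM\sum_{d_{m,u}\in\cC_m}\omega_{m,u}\bn^{m,u,t}$, condition on the cluster realization so the zero-mean independent Gaussian noises kill the cross terms and contribute $\omega_{m,u}^2 d\sigma_{m,u}^2$ each, then bound $\omega_{m,u}^2\le 1$ and average the membership indicator to get the factor $c_m/g_m$. The only (immaterial) difference is the order in which you discard the weights versus take the cluster expectation.
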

\begin{proof}
    See Appendix~\ref{AppTerm3}.
\end{proof}
\noindent The term on the r.h.s. of \eqref{eq:ub4} accounts for link noise, which disappears if the device-to-FC links are noiseless.
We now bound the error term $\E[ \| \overline{\bw}^{t+1} - \bw^{\star}\|^2_2]$ on the r.h.s of  \eqref{eq:ub1}. Since this term is independent of the clipping, quantization, and link noise,  bounding it follows the conventional framework \cite{Hassani2019}. 
Its detailed proof is provided in the supplement document \cite{Qua2025Supp}.
\begin{lemma}\label{lm:1stTerm}
Suppose Assumptions~\ref{assumption1}-\ref{assumption3} hold and the learning rate $\eta_t \leq \min\{\frac{1}{\mu}, \frac{1}{L\gamma}, \frac{\mu}{\gamma^2}\}$, where $\gamma$ and $\mu$ are defined in Assumption~\ref{assumption1} (smoothness) and Assumption~\ref{assumption2} (strong convexity),  respectively. 
Then, the following holds 
\begin{multline}\small
\label{eq:ub2}
\!\!\d4 \E[ \| \overline{\bw}^{t+1} - \bw^{\star}\|^2_2] \leq (1+K\eta_t^2)(1-\mu \eta_t)^L\E[\|\bw^{t} - \bw^{\star}\|^2_2] \\  + \Big(\frac{1}{K} + \eta_t^2\Big)(L^2\zeta + KL^3\gamma^2\zeta\eta^2_t e),
\end{multline}where $\zeta$ is defined in Assumption~\ref{assumption3} (bounded variance).  
\end{lemma}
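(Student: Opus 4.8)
\textbf{Proof proposal for Lemma~\ref{lm:1stTerm}.}

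The plan is to bound $\E[\|\overline{\bw}^{t+1} - \bw^\star\|_2^2]$ by relating $\overline{\bw}^{t+1}$ to the virtual gradient-descent sequence $\{\bb^{l,t}\}_{l=0}^{L}$ defined in \eqref{eq:defb}, which is driven by the exact global gradient $\nabla f$. Recall $\overline{\bw}^{t+1} = \bw^t + \sumM\sum_{d_{m,u}\in\cC_m}\omega_{m,u}\bv^{m,u,t}$ with $\bv^{m,u,t} = \bw^{L,m,u,t} - \bw^t$, so $\overline{\bw}^{t+1} = \sumM\sum_{d_{m,u}\in\cC_m}\omega_{m,u}\bw^{L,m,u,t}$ using $\sum\omega_{m,u}=1$. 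The first step is to write $\overline{\bw}^{t+1} - \bw^\star = (\bb^{L,t} - \bw^\star) - \eta_t\be^t - (\text{cumulative drift between }\bw^{l,m,u,t}\text{ and }\bb^{l,t})$, i.e. decompose the aggregated local update into the virtual descent step plus the accumulated gradient error $\be^t = \sum_{l=0}^{L-1}\be^{l,t}$. Then expand $\E[\|\overline{\bw}^{t+1}-\bw^\star\|_2^2]$ via the identity $\|\ba+\bb\|_2^2 = \|\ba\|_2^2 + 2\langle\ba,\bb\rangle + \|\bb\|_2^2$ (or a Young's-inequality splitting $\|\ba+\bb\|_2^2 \le (1+\kappa)\|\ba\|_2^2 + (1+\kappa^{-1})\|\bb\|_2^2$ with $\kappa = K\eta_t^2$ chosen to produce the $(1+K\eta_t^2)$ prefactor), treating the ``good'' term as $\bb^{L,t}-\bw^\star$ and the ``error'' term as the stochastic-gradient/drift contribution.

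Next I would invoke the two supplied estimates \eqref{T1_eq6} and \eqref{T1_eq10}: the contraction $\|\bb^{L,t}-\bw^\star\|_2^2 \le (1-\mu\eta_t)^L\|\bw^t-\bw^\star\|_2^2$ handles the good term directly, and the deviation bound $a^{l,t}\le \eta_t^2 L\zeta(1+L\eta_t^2\gamma^2)^{l-1}$ controls how far the true local iterates $\bw^{l,m,u,t}$ wander from $\bb^{l,t}$. For the error term, unbiasedness ($\E[\wnabla f_{m,u}] = \nabla f$, Assumption~\ref{assumption3}) kills the cross term $\E[\langle\bb^{L,t}-\bw^\star,\be^t\rangle]$ in expectation (conditioning on the iterates), and bounded variance plus $\gamma$-smoothness bound $\E[\|\be^t\|_2^2]$: using Cauchy–Schwarz over the $L$ summands and over devices, $\E[\|\be^t\|_2^2] \lesssim L\sum_l \E[\|\be^{l,t}\|_2^2]$, and each $\E[\|\be^{l,t}\|_2^2]$ splits into an SGD-variance part $\le \zeta$ and a smoothness part $\le \gamma^2 a^{l,t}$. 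Summing the geometric-type series $\sum_{l=0}^{L-1}(1+L\eta_t^2\gamma^2)^{l-1}$ and using the learning-rate restriction $\eta_t \le 1/(L\gamma)$ to bound $(1+L\eta_t^2\gamma^2)^{L-1} \le e$ produces the $L^2\zeta + KL^3\gamma^2\zeta\eta_t^2 e$ structure, with the $(1/K + \eta_t^2)$ factor emerging from combining the $1/K$ normalization in $a^{l,t}$ with the $(1+\kappa^{-1}) = 1 + 1/(K\eta_t^2)$ weight, after multiplying through by $\eta_t^2$.

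The main obstacle I anticipate is the bookkeeping in the error term: carefully tracking the $\eta_t$ powers, the factor of $L$ from the number of local steps, the factor of $K$ (versus $N$) arising because $a^{l,t}$ averages over all $g_m$ devices in each group while only $c_m$ participate, and getting the geometric sum to collapse cleanly into the claimed closed form under the stated step-size cap. Getting the precise constants $(1+K\eta_t^2)$ and $(1/K+\eta_t^2)$ — as opposed to merely $O(\cdot)$ bounds — requires choosing the Young's-inequality parameter $\kappa$ exactly and not being wasteful when passing from $\E[\|\be^t\|_2^2]$ through the smoothness term. Since this term involves neither clipping, quantization, nor link noise, I expect it to follow the template of \cite{Hassani2019} closely, so the bulk of the work is verifying that the cluster-sampling structure $\cC_m\subseteq\cG_m$ does not alter the argument beyond replacing sums over $\cC_m$ by the convex combination with weights $\omega_{m,u}$, which is why the detailed computation is deferred to the supplement \cite{Qua2025Supp}.
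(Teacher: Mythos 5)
Your route is essentially the paper's: the machinery the paper sets up for this lemma (the virtual sequence \eqref{eq:defb}, the errors $\be^{l,t},\be^t$, the deviation $a^{l,t}$, and the bounds \eqref{T1_eq6}--\eqref{T1_eq10}, following the \cite{Hassani2019} template with the detailed steps in \cite{Qua2025Supp}) is used exactly as you propose — write $\overline{\bw}^{t+1}-\wst=(\bb^{L,t}-\wst)-\eta_t\be^t$, split with Young's parameter $K\eta_t^2$, contract the first term via \eqref{T1_eq6}, and bound $\E[\|\be^t\|_2^2]\le L^2\zeta+KL^3\gamma^2\zeta\eta_t^2 e$ using Assumption~\ref{assumption3}, $\gamma$-smoothness, \eqref{T1_eq10}, and $\eta_t\le \frac{1}{L\gamma}$ — which reproduces \eqref{eq:ub2} with the exact constants. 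One small correction: the cross term $\E[\langle\bb^{L,t}-\wst,\eta_t\be^t\rangle]$ does not vanish outright (only its zero-mean SGD-noise component does, while the drift part $\nabla f(\bw^{l,m,u,t})-\nabla f(\bb^{l,t})$ is not zero-mean), so the $(1+K\eta_t^2)$/$(\frac{1}{K}+\eta_t^2)$ Young splitting you also name — whose presence in \eqref{eq:ub2} signals the intended argument — is the step actually required, not the exact-identity expansion.
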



 \begin{figure*}[htb]
        \centering
        \subfloat[]{%
        \includegraphics[width=0.38\textwidth]{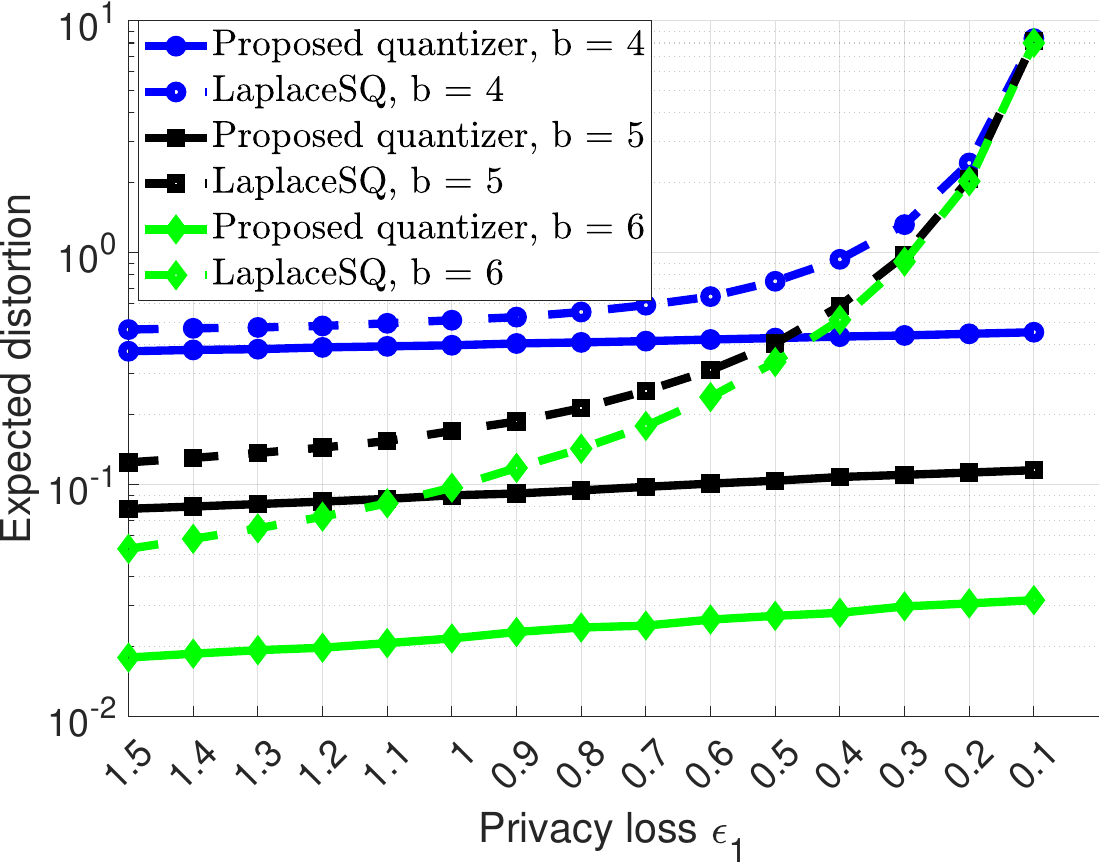}%
     \label{fig:SQ_Lap}
    }%
    \hspace{2mm}
        \subfloat[]{%
        \includegraphics[width=0.28\textwidth]{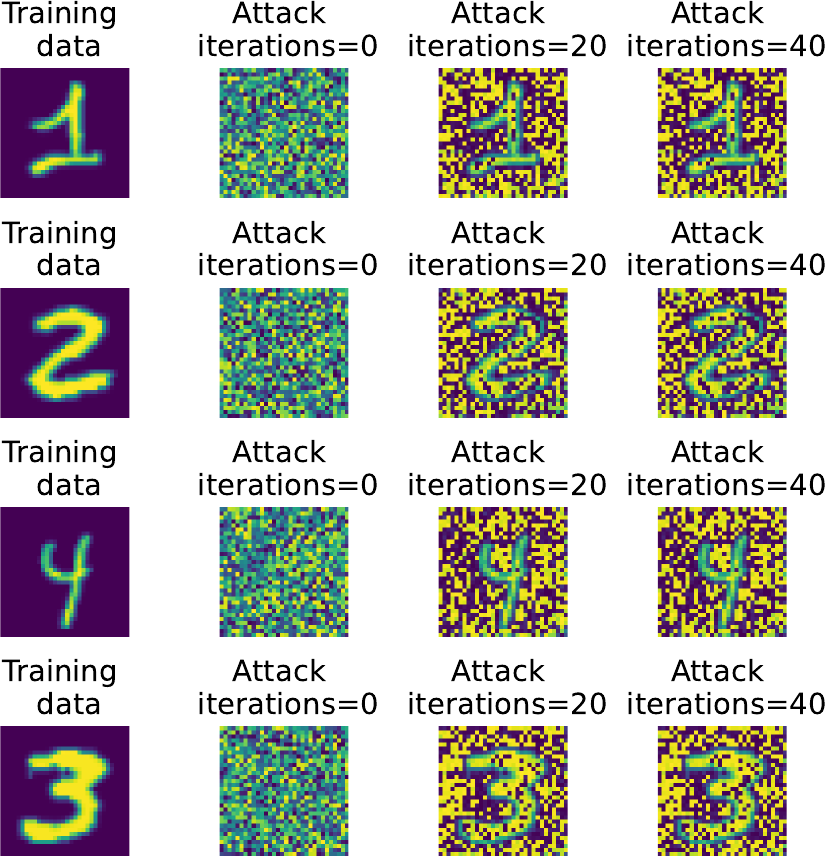}%
        \label{fig:5.5a}%
        }%
    \hspace{2mm}
        \subfloat[]{%
        \includegraphics[width=0.28\textwidth]{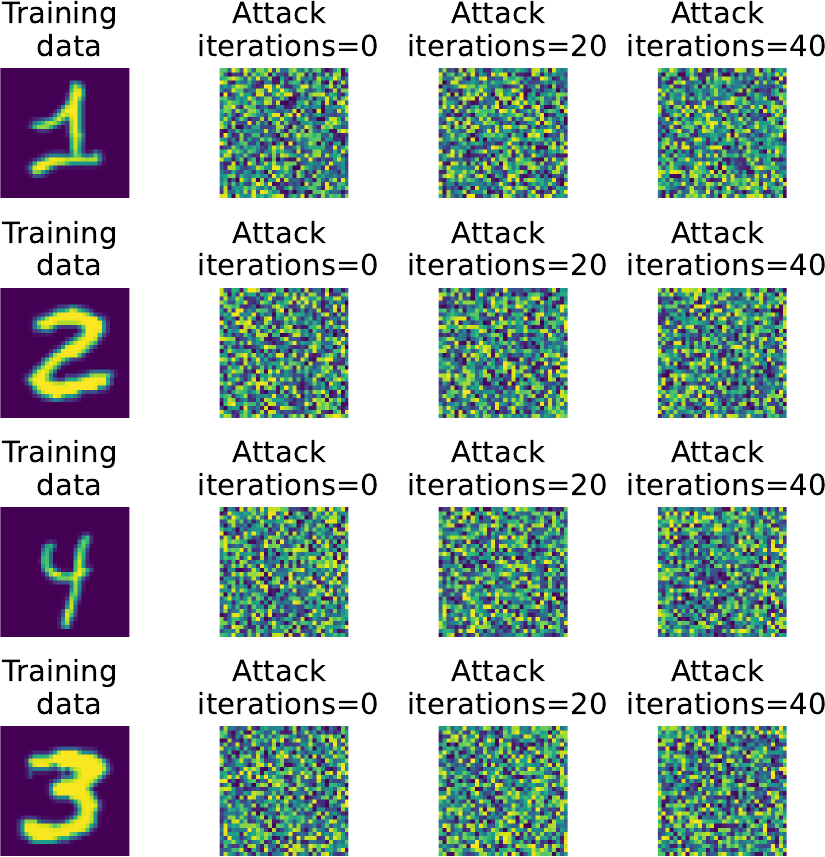}%
     \label{fig:5.5b}
    }%
    
        \caption{a) Expected distortions of the proposed quantizer $\cQ_{b,\epsilon_1}(a)$ and the LaplaceSQ mechanism  $\mathsf{L}(a)$ with $b \in \{4,5,6\}$ when $\epsilon_1$ decreases from $1.5$ to $0.1$. b) and c) DLG attack results on SQ-FL and Algorithm 1, respectively, for $\epsilon_{1,m,u} = 10^{-6}$ after attack iterations in $\{0,20,40\}$.}
        \label{fig:5.5}
    \vspace{-2mm}
    \end{figure*}

Using \eqref{eq:ub1} and the results in \eqref{eq:ub4} and \eqref{eq:ub2}, the expected learning error $\E[\|\bw^{t+1} - \bw^{\star}\|^2_2]$ is upper bounded as follows:
\begin{equation}\small\label{eq:ub5}
\begin{aligned}
    \d4 \E[\|\bw^{t+1}-\bw^{\star}\|^2_2]  \leq  m_{1,t}\E[\|\bw^t - \bw^{\star}\|^2_2] + m_{2,t}  + \Delta ,
\end{aligned}
\end{equation} where $m_{1,t} = (1+K\eta_t^2)(1-\mu \eta_t)^L + 4 \eta_t^2 \Lambda N L^2\gamma^2 $, $m_{2,t} = \eta_t^2 (L^2\zeta + L^3 \gamma^2\zeta e + 4\Lambda K L \zeta) + \eta^4_t(1+4\Lambda)KL^3\gamma^2\zeta e$, and 
\begin{equation}\small
    \Delta \!=\! \frac{L^2\zeta}{K} + d\sumM \sum_{d_{m,u} \in \cG_m} \! \frac{c_m}{g_m}\!\Big(\frac{8C^2}{(2^{b_m}-1)^2} + \sigma_{m,u}^2\Big) + 2\vartheta. \label{eq:Delta}
\end{equation}
\noindent The deviation term $\Delta$ in \eqref{eq:Delta} captures the combined effects of cluster sizes, link noise, quantization resolution heterogeneity, and quantization error. 
From the bound in \eqref{eq:ub5}, the following theorem describes the convergence behavior of the proposed Algorithm~\ref{algorithm1}.
\begin{theorem}
\label{theorem1} 
Suppose the learning rate $\eta_t = \frac{8}{L \mu  t}$.
Substituting $\eta_t = \frac{8}{L\mu t}$ into $m_{2,t}$ leads to $m_{2,t} = \frac{k_1}{t^2} + \frac{k_2}{t^4}$, where $k_1 = \frac{64}{L^2\mu^2}\Big(L^2\zeta + L^3\gamma^2\zeta e + 4 \Lambda KL \zeta\Big)$ and $k_2 = \frac{8^4}{L^4\mu^4}(1+4\Lambda)K L^3 \gamma^2 \zeta e$. 
Let $t_0 \geq 4$ be the first FL round such that $\eta_{t_0} \leq \min\{ \frac{1}{\mu}, \frac{\mu}{\gamma^2},\frac{1}{L\mu}, \frac{1}{L\gamma}, \frac{L\mu}{2(L^2\mu^2 + K + 4\Lambda N L^2\gamma^2)}\}$, where $\gamma$, $\mu$, and $\Lambda$ are defined in Assumptions~\ref{assumption1}, \ref{assumption2}, and \ref{assumption4},  respectively.
Then, the expected learning error $\E[\|\bw^{t}-\bw^{\star}\|^2_2]$ at the $t^{\text{th}}$ round is upper bounded by \sloppy
    \begin{equation}
        \label{eqtheorem1}
        \E[\|\bw^{t}-\bw^{\star}\|^2_2] \leq         \frac{t_0}{t}\E[\|\bw^{t_0}-\bw^{\star}\|^2_2] + m_{3,t} +  (t-4)\Delta,
    \end{equation} where $m_{3,t} = \frac{k_1}{t} + \frac{k_2}{t^3}$ and $\Delta$ is defined in \eqref{eq:Delta}.
\end{theorem}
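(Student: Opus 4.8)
The plan is to unroll the one-step recursion \eqref{eq:ub5} across rounds $t_0, t_0+1, \dots, t$ and exploit the specific decay of the coefficients induced by the step size $\eta_t = \frac{8}{L\mu t}$. First I would verify that for $t \geq t_0$ the contraction factor satisfies $m_{1,t} \leq 1 - \frac{1}{t}$, or more precisely something like $m_{1,t} \leq \frac{t-1}{t}$; this is exactly why the round $t_0$ is defined as the first round for which $\eta_{t_0}$ is below the stated minimum, in particular below $\frac{L\mu}{2(L^2\mu^2 + K + 4\Lambda N L^2\gamma^2)}$. Plugging $\eta_t = \frac{8}{L\mu t}$ into $m_{1,t} = (1+K\eta_t^2)(1-\mu\eta_t)^L + 4\eta_t^2 \Lambda N L^2\gamma^2$, I would use $(1-\mu\eta_t)^L \leq 1 - L\mu\eta_t + \binom{L}{2}(\mu\eta_t)^2 \leq 1 - \frac{8}{t} + \frac{32 L^2\mu^2}{L^2\mu^2 t^2}$ (valid once $\mu\eta_t \leq \tfrac1L$, which holds for $t \geq t_0$), and then absorb all the $O(1/t^2)$ contributions into the term $-\frac{8}{t}$ using the threshold on $\eta_{t_0}$, leaving $m_{1,t} \leq 1 - \frac{1}{t} \leq \frac{t-1}{t}$. (Even $1-\frac{c}{t}$ for some $c\geq 1$ suffices; the stated bound only needs $c \geq 1$.)

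Next I would set up the telescoping. Writing $A_t = \E[\|\bw^t - \bw^\star\|_2^2]$, the recursion reads $A_{t+1} \leq \frac{t-1}{t} A_t + m_{2,t} + \Delta$. Multiplying through by $t$, I would show $t A_{t+1} \leq (t-1) A_t + t\,m_{2,t} + t\Delta$, and then sum from $t = t_0$ to $t = T-1$, so the left side telescopes to $(T-1)A_T$ against $(t_0-1)A_{t_0}$ on the right. Rearranging gives
\begin{equation*}
A_T \leq \frac{t_0 - 1}{T-1} A_{t_0} + \frac{1}{T-1}\sum_{t=t_0}^{T-1} t\,m_{2,t} + \frac{1}{T-1}\sum_{t=t_0}^{T-1} t\Delta.
\end{equation*}
For the $\Delta$-sum, $\sum_{t=t_0}^{T-1} t \leq \frac{(T-1)T}{2}$ minus a correction, but since the theorem only claims an $(t-4)\Delta$ bound it is enough to crudely bound $\frac{1}{T-1}\sum_{t=t_0}^{T-1} t \leq T - \text{const}$; using $t_0 \geq 4$ this yields the $(t-4)\Delta$ term (renaming $T \to t$). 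For the $m_{2,t}$-sum, recall $m_{2,t} = \frac{k_1}{t^2} + \frac{k_2}{t^4}$, so $\sum_{t=t_0}^{T-1} t\,m_{2,t} = k_1\sum \frac{1}{t} + k_2 \sum \frac{1}{t^3}$; bounding $\sum_{t=t_0}^{T-1}\frac1t \leq \ln T$ is too weak, so instead I would use the sharper telescoping trick of comparing against $\frac{1}{t-1} - \frac{1}{t}$ only where needed — actually the clean route is: since we want a $\frac{k_1}{t}$ term in the final bound, I would not sum $t\cdot\frac{k_1}{t^2}$ directly but rather carry $m_{2,t}$ through the recursion more carefully, using that $\sum_{s=t_0}^{t-1}\frac{1}{s+1}\cdot\frac{\text{(product of contractions)}}{1} $ type sums telescope; the bookkeeping is chosen precisely so that the $\frac{1}{t^2}$ and $\frac{1}{t^4}$ terms of $m_{2,t}$ produce $\frac{k_1}{t}$ and $\frac{k_2}{t^3}$ in $m_{3,t}$.

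The main obstacle I anticipate is the careful estimate in the previous paragraph: making the weighted sum of $m_{2,t}$ collapse to exactly $m_{3,t} = \frac{k_1}{t} + \frac{k_2}{t^3}$ rather than something with a stray logarithmic factor. The resolution relies on the fact that with contraction factor $\frac{s-1}{s}$, the product $\prod_{s=j+1}^{t-1}\frac{s-1}{s} = \frac{j}{t-1}$, so the accumulated noise term is $\sum_{j=t_0}^{t-1} \frac{j}{t-1} m_{2,j} = \frac{1}{t-1}\sum_j j\,m_{2,j} = \frac{1}{t-1}(k_1 \sum_j \frac1j + k_2 \sum_j \frac{1}{j^3})$ — and here I would bound $\sum_{j=t_0}^{t-1}\frac1j \leq \frac{t-1}{t_0}\cdot$(something) is still not $O(1)$; the actual trick the authors must use is bounding $j\,m_{2,j} = \frac{k_1}{j} + \frac{k_2}{j^3} \leq k_1 + k_2$ crudely is also too weak. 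The correct and standard device is: $j\,m_{2,j} \le k_1 \cdot \frac1j + k_2\cdot \frac1{j^3}$, and $\frac1{t-1}\sum_{j=1}^{t-1}\frac{k_1}{j}$ genuinely is $O(\frac{\ln t}{t})$ — so to get the clean $\frac{k_1}{t}$ the authors likely bound $\sum_{j=t_0}^{t-1} \frac1j \le \int \le \ln\frac{t-1}{t_0-1}$ and then note $\ln x \le x$... this still leaves a constant, not $1/t$ decay. Thus I expect the real argument keeps $m_{2,t}$ attached without the extra factor of $t$ by a slightly different normalization (multiply the recursion by $t$ but observe $\sum t m_{2,t} = \sum(\frac{k_1}{t}+\frac{k_2}{t^3})$ and then $\frac{1}{T-1}$ times this, with $\sum_{t_0}^{T-1}\frac1t \le \ln T \le$ — and finally absorbing $\frac{\ln T}{T} = O(\frac1{\sqrt T})$ or simply stating $m_{3,t}$ as an upper bound valid because $\frac{\ln t}{t}\cdot(\text{const}) \le \frac{k_1}{t}$ fails) — so the delicate point, and where I would spend the most care, is getting the constants and the summation bounds to line up so that the displayed $m_{3,t} = \frac{k_1}{t}+\frac{k_2}{t^3}$ is in fact a valid (possibly loose) upper bound; everything else is routine substitution of $\eta_t = \frac{8}{L\mu t}$ into $m_{1,t}, m_{2,t}$ and the telescoping bookkeeping.
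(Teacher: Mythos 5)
There is a genuine gap, and it sits exactly where you yourself flag uncertainty: the accumulation of the $m_{2,t}$ terms. Your parenthetical claim that ``even $1-\frac{c}{t}$ for some $c\geq 1$ suffices'' is false, and it is the wrong turn that produces the stray logarithm you then cannot remove. With a contraction factor $\frac{t-1}{t}$ the product of contractions is $\frac{j}{t-1}$, so the accumulated noise is $\frac{1}{t-1}\sum_j\bigl(\frac{k_1}{j}+\frac{k_2}{j^3}\bigr)=O\bigl(\frac{k_1\ln t}{t}\bigr)$, which is \emph{not} bounded by $m_{3,t}=\frac{k_1}{t}+\frac{k_2}{t^3}$; your proposal ends by conceding this without resolving it, so the claimed bound is never established. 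The entire point of taking $\eta_t=\frac{8}{L\mu t}$ together with the threshold $\eta_{t_0}\leq \frac{L\mu}{2(L^2\mu^2+K+4\Lambda NL^2\gamma^2)}$ (and $\eta_{t_0}\le\frac{1}{L\mu}$) is to get the stronger estimate $m_{1,t}\leq 1-\frac{L\mu\eta_t}{2}=1-\frac{4}{t}$: the paper bounds $(1-\mu\eta_t)^L\le e^{-L\mu\eta_t}\le 1-L\mu\eta_t+L^2\mu^2\eta_t^2$ and absorbs all quadratic terms into half of the linear term. The coefficient $4$ is not cosmetic; the induction needs it. Indeed, checking $(1-\frac{c}{t})\frac{k_2}{t^3}+\frac{k_2}{t^4}\le\frac{k_2}{(t+1)^3}$ amounts to $(t-c+1)(t+1)^3\le t^4$, whose $t^3$ coefficient is $4-c$, so roughly $c\ge 4$ is required; with $c=1$ even the $\frac{k_1}{t}$ term fails (it would need $(t-c+1)(t+1)\le t^2$, i.e.\ $c\ge 2$ asymptotically).

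The paper also closes the argument differently from your unrolling/telescoping plan: it proceeds by induction on $t$ with the stated bound as the induction hypothesis. Plugging the hypothesis into $\E[\|\bw^{t_1+1}-\bw^\star\|^2_2]\le(1-\frac{4}{t_1})\E[\|\bw^{t_1}-\bw^\star\|^2_2]+m_{2,t_1}+\Delta$ and using the exact identities $(1-\frac{4}{t_1})\frac{k_1}{t_1}+\frac{k_1}{t_1^2}=\frac{(t_1-3)k_1}{t_1^2}$ and $(1-\frac{4}{t_1})\frac{k_2}{t_1^3}+\frac{k_2}{t_1^4}=\frac{(t_1-3)k_2}{t_1^4}$, together with the elementary inequalities $(1-\frac{4}{t_1})\frac{t_0}{t_1}\le\frac{t_0}{t_1+1}$, $\frac{t_1-3}{t_1^2}\le\frac{1}{t_1+1}$, $\frac{t_1-3}{t_1^4}\le\frac{1}{(t_1+1)^3}$, and $(1-\frac{4}{t_1})(t_1-4)\Delta+\Delta\le(t_1-3)\Delta$, reproduces the bound at $t_1+1$ with no logarithmic residue. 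So your derivation of the recursion and the general shape of the argument are fine, but the key quantitative ingredient (the factor $4$ in the contraction) is missing, and without it the step you identify as delicate genuinely does not go through.
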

\begin{proof}
    See Appendix~\ref{AppTheorem1}.
\end{proof}
As $t$ increases, the first and second terms on the r.h.s of \eqref{eqtheorem1} diminish, ensuring that learning utility improves over time. 
However, the third term, $(t-4)\Delta$ grows unbounded as $t$ increases. 
This observation underscores a fundamental trade-off: initial extended training enhances model accuracy, but continued training  can negatively impact learning utility.
This finding is consistent with prior works in privacy-preserving FL \cite{Abadi2016, Kang2020, Kang2023, Qua2023Asilomar}.
\subsection{Cluster Sizes Optimization}\label{SecVB}
As $t$ increases in \eqref{eqtheorem1}, it is advantageous to minimize $\Delta$ to mitigate learning utility degradation.
It is noteworthy that $\Delta$ in \eqref{eq:Delta} exhibits a linear relationship with cluster sizes $\{ c_m \}$, which motivates the cluster size optimization.
The FC optimizes the cluster sizes $\{c_m\}_{m=1}^{M}$ to minimize the deviation term $\Delta$ in \eqref{eqtheorem1} leading to 
\begin{subequations}\small
    \label{eq:op1}
    \beq
    \d4\min_{\{c_m\}_{m=1}^M} \d4&& \d4\sumM  \frac{c_m}{g_m}\left(\sum_{d_{m,u} \in \cG_m} \frac{8C^2}{(2^{b_m}-1)^2} + \sigma_{m,u}^2\right), \label{eq:op1.1}\\
    \d4\text{subject to } \d4&&\d4 \sumM b_m c_m \leq B,    \label{eq:op1a}\\
    \d4&&\d4 1 \leq c_m \leq g_m, \forall m,    \label{eq:op1b}\\
    \d4&&\d4 \sumM c_m = N,    \label{eq:op1c}
    \eeq 
\end{subequations}
where \eqref{eq:op1a} constrains the total number of quantization bits in each FL round defined in \eqref{eq:defB}, \eqref{eq:op1b} limits the number of devices in each cluster $c_m$ to be bounded by $g_m$, and \eqref{eq:op1c} ensures that the total number of selected devices in each FL round is $N$.
The problem in \eqref{eq:op1} is linear integer programming (LIP).
Thus, the optimal solution to the problem in  \eqref{eq:op1} can be obtained using the branch-reduce-and-bound algorithm \cite{li2006nonlinear, TJ2015}.
Since it is a standard procedure, we omit the detail.

\section{Simulation Results}\label{SecVI}
{
This section presents comprehensive numerical results to validate the effectiveness of the proposed framework. 
We begin by evaluating the expected quantization distortion introduced by the proposed differentially private SQ in Section~\ref{SecII.B}.
Next, we assess the privacy protection capability of Algorithm~\ref{algorithm1} against a model inversion attack.
Then, we examine the learning utility of Algorithm~\ref{algorithm1} under various DP requirements, focusing on training loss and test accuracy with different configurations.
}

\subsection{Quantization Distortion Evaluation}
{This subsection aims to evaluate the expected quantization distortion introduced by the proposed differentially private SQ in Lemma~\ref{lm1}, in comparison with the conventional LaplaceSQ mechanism in \eqref{eqLaplaceSQ}.}
Fig.~\ref{fig:SQ_Lap} demonstrates the expected distortions of {the proposed quantizer $\cQ_{b,\epsilon_1}(a)$ in Lemma~\ref{lm1} and the  LaplaceSQ mechanism $\mathsf{L}(a)$ in \eqref{eqLaplaceSQ}} for $b \in \{4,5,6\}$, as $\epsilon_1$ decreases from 1.5 to 0.1.
The input $a$ is drawn uniformly from the interval $[-10,10]$, leading to the $\ell_1$-sensitivity (in \eqref{eq:l1sen}) $\rho = 20$ of the LaplaceSQ mechanism.
It can be seen from Fig.~\ref{fig:SQ_Lap} that increasing the quantization bits $b$ from 4 to 6 reduces the expected distortion of the proposed quantizer and LaplaceSQ.
Notably, the proposed quantizer $\cQ_{b,\epsilon_1}(a)$ achieves lower expected distortion than the LaplaceSQ mechanism. 
When $\epsilon_1 = 0.1$ and $b  = 6$, the distortion of the proposed quantizer is approximately $2.5$ orders of magnitude smaller than the LaplaceSQ mechanism.
{These empirical observations confirm that the expected distortion of the proposed SQ is bounded while that of LaplaceSQ grows unbounded. This is consistent with our analysis in Remark~\ref{rmk1}.}
\subsection{Privacy Protection Evaluation}
We now numerically evaluate the privacy protection capability of Algorithm~\ref{algorithm1} against the model inversion attack \cite{Zhu2019}. 
The adversary in Fig.~\ref{fig:1} employs the deep leakage from gradients (DLG) attack \cite{Zhu2019} to reconstruct local training data from model updates. 
Following the setup in \cite{Zhu2019, Lang2023}, we use the LeNet \cite{LeNet1989} architecture for MNIST classification \cite{lecun2010} to assess the attacks.
We consider the worst-case scenario, where the adversary obtains a noiseless update $\widehat{\bw}^{m,u,t}$ (Step \ref{step13} of Algorithm~\ref{algorithm1}) from a device $d_{m,u} \in \cG_m$ using $b_m = 6$ bits. 
The stochastic gradient $\Wnabla f_{m,u}$ (Step \ref{step8}) is computed from a single data point with $L=1$ local iteration. 
This setting is intentionally chosen to be the most vulnerable scenario to a DLG attack because it is more challenging for DLG to reconstruct multiple training data points $(|\cB_{m,u,t}| > 1)$ simultaneously under the noisy links \cite{Zhu2019, Lang2023}.\looseness=-1
To perform the attack, the adversary initializes a dummy data point and updates it via gradient descent to minimize the difference between its gradient and $\widehat{\bw}^{m,u,t}$. 
Each gradient descent step is referred to as an attack iteration.
We assess reconstruction quality using the structural similarity index measure (SSIM) \cite{SSIM2004, Lang2023}, where a higher SSIM indicates greater data leakage.
To evaluate privacy protection capability under the DLG attack, we define a standard FL baseline using the stochastic quantizer from \cite{Alistarh2017}, which we denote as SQ-FL. 
SQ-FL is implemented by replacing the quantizer $\cQ_{b,\epsilon_1}(\cdot)$ in Step~\ref{step12} with the quantizer $Q_b(\cdot)$ in \cite{Alistarh2017}.
Figs.~\ref{fig:5.5a} and \ref{fig:5.5b} compare reconstruction results by the DLG attack on SQ-FL and Algorithm~\ref{algorithm1}, respectively, for $\epsilon_{1,m,u} = 10^{-6}$ at attack iterations 0, 20, and 40.
In Fig.~\ref{fig:5.5a}, DLG gradually reconstructs data under SQ-FL, revealing clear privacy leakage. 
In contrast, Fig.~\ref{fig:5.5b} shows severely distorted reconstructions, preventing the adversary from recovering meaningful information.
This difference arises because SQ-FL lacks a formal DP guarantee while Algorithm~\ref{algorithm1} ensures a DP guarantee with $\epsilon_{1,m,u} = 10^{-6}$.\looseness=-1

   \begin{table}[htb]
    \centering
     \caption{Comparison of SSIM values between Algorithm~\ref{algorithm1} for $\epsilon_{1,m,u} = 10^{-6}$ and SQ-FL for attack iterations in $\{0,20,40\}$.}
    \includegraphics[width=0.48\textwidth]{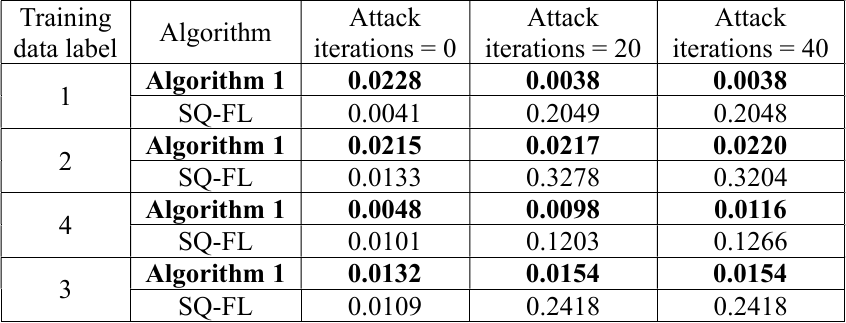}
    \label{tab:ssim_comparison}
    \end{table}
Table~\ref{tab:ssim_comparison} provides a quantitative assessment of privacy leakage by comparing SSIM values of Algorithm~\ref{algorithm1} and SQ-FL for the reconstructed results shown in Figs.~\ref{fig:5.5a} and \ref{fig:5.5b}.
The results in Table~\ref{tab:ssim_comparison} confirm that Algorithm~\ref{algorithm1} maintains low SSIM values across all attack iterations. 
Conversely, SQ-FL exhibits significant privacy leakage, with SSIM values increasing as attack iterations progress. 
{It is worth noting that SQ-FL attains better learning utility (i.e., training loss and testing accuracy) than Algorithm~\ref{algorithm1} because the SQ $Q_b(\cdot)$ in \cite{Alistarh2017} yields a lower quantization distortion than the proposed SQ $\cQ_{b,\epsilon_1}(\cdot)$ in Lemma~\ref{lm1}.
However, to ensure the same DP guarantee as Algorithm~\ref{algorithm1}, SQ-FL would need to incorporate the Laplace mechanism described in Lemma~\ref{LpMechanism}.
The learning utility of the latter incorporation is evaluated in the next subsection.}
 \begin{figure*}[htb]
        \centering
        \subfloat[]{%
        \includegraphics[width=0.32\textwidth]{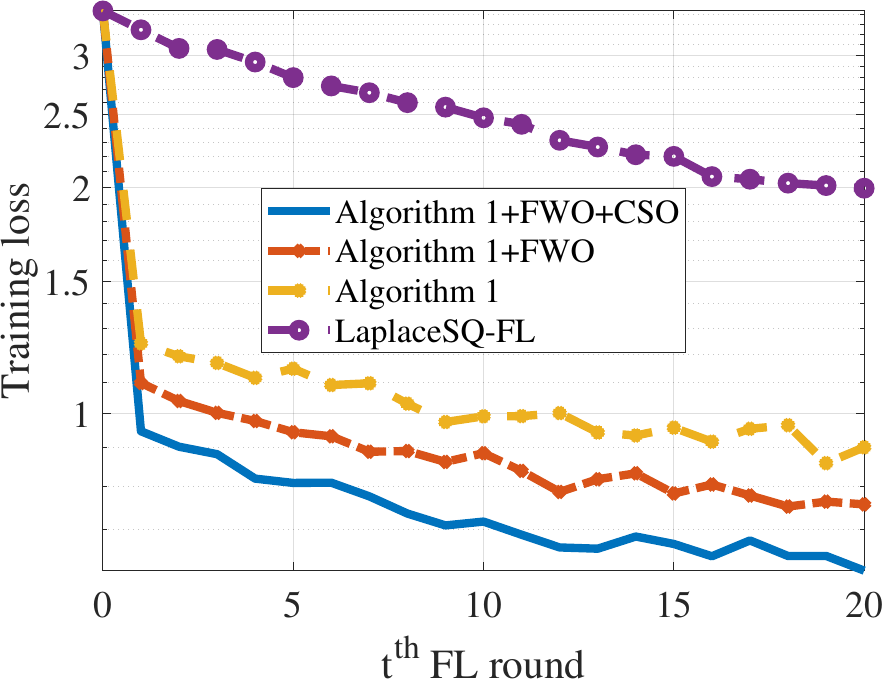}%
        \label{fig:5.1a}%
        }%
    \hspace{2mm}
        \centering
        \subfloat[]{%
        \includegraphics[width=0.32\textwidth]{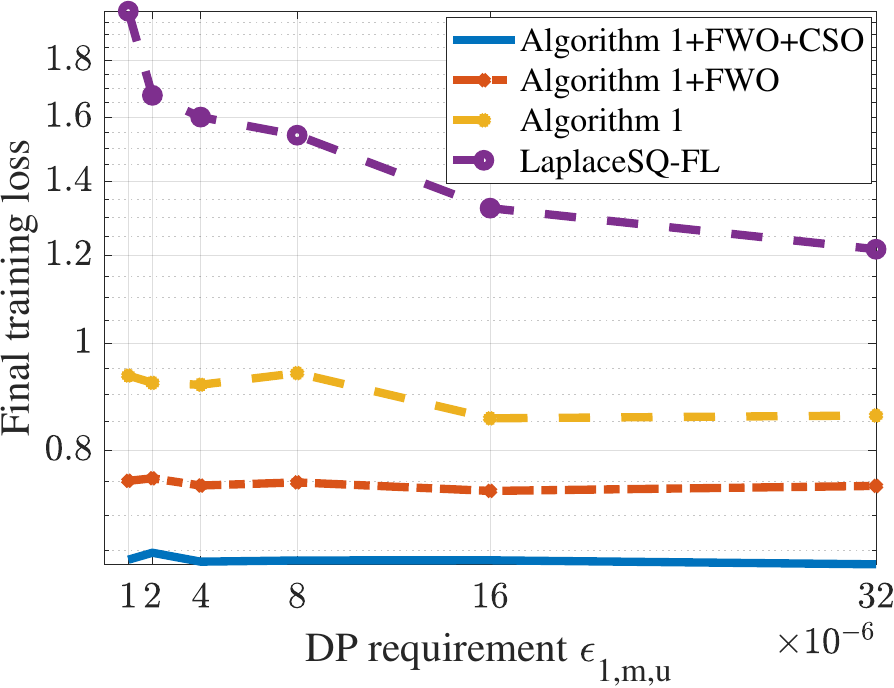}%
        \label{fig:5.2a}%
        }%
     \hspace{2mm}
        \centering
        \subfloat[]{%
        \includegraphics[width=0.32\textwidth]{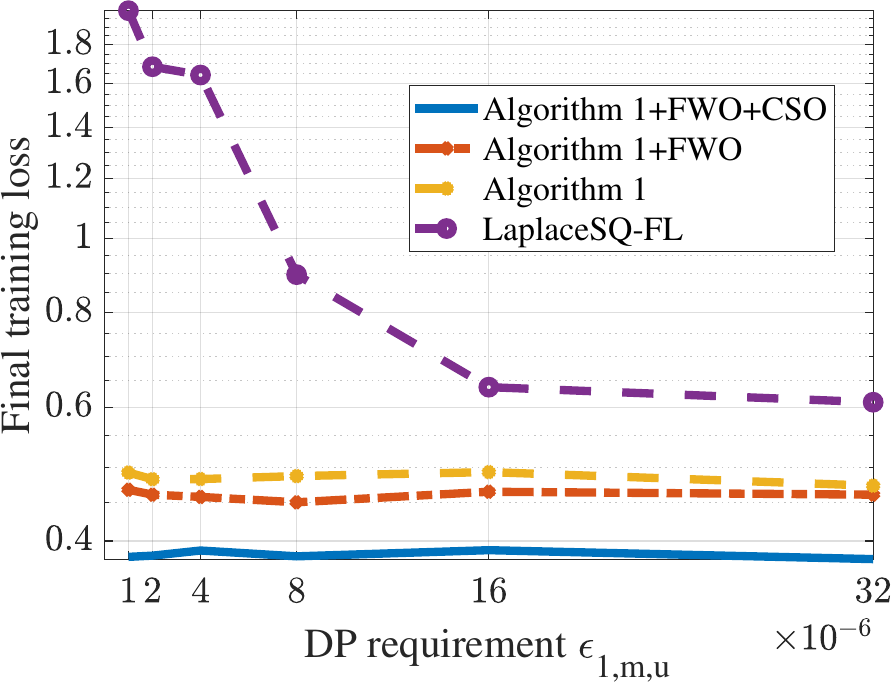}%
        \label{fig:5.3b}%
        }%
    \vspace{-0.2cm}
        \caption{a) Training loss of Algorithm~\ref{algorithm1}+FWO+CSO, Algorithm~\ref{algorithm1}+FWO, Algorithm~\ref{algorithm1}, and LaplaceSQ-FL when $\epsilon_{1,m,u} = 10^{-6},\forall d_{m,u}$. b) and c) Final training loss of Algorithm~\ref{algorithm1}+FWO+CSO, Algorithm~\ref{algorithm1}+FWO, Algorithm~\ref{algorithm1}, and LaplaceSQ-FL for $\epsilon_{1,m,u} \in 10^{-6} \times\{1,2,4,8,16,32\}$ when $\sigma_{2,u'} = 0.125$ and $\sigma_{2,u'} = 1.25\times10^{-2}$, respectively.}
                \label{fig:5.1}

    \end{figure*}
 \begin{figure*}[htb]
        \centering
        \subfloat[]{%
    \includegraphics[width=0.31\textwidth]{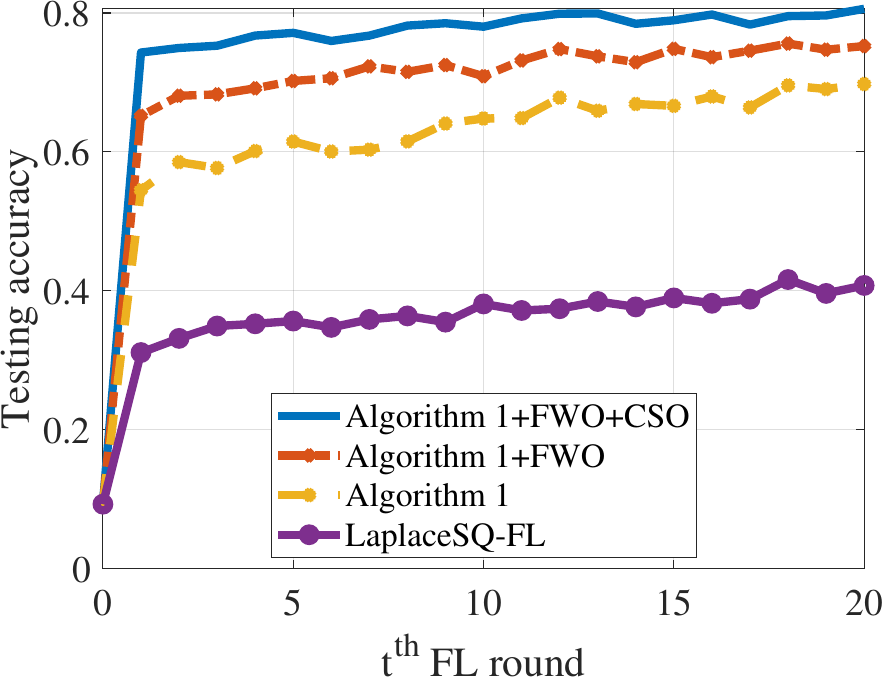}%
     \label{fig:5.1b}
        }
    \hspace{2mm}
   \centering
    \subfloat[]{%
    \includegraphics[width=0.31\textwidth]{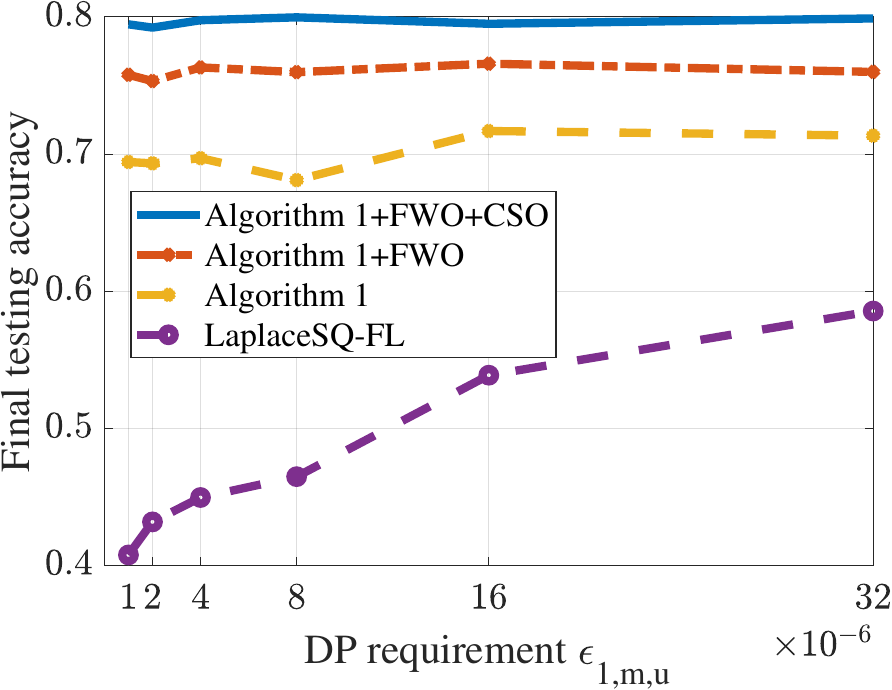}%
     \label{fig:5.2b}
    }%
     \hspace{2mm}
   \centering
     \subfloat[]{%
    \includegraphics[width=0.31\textwidth]{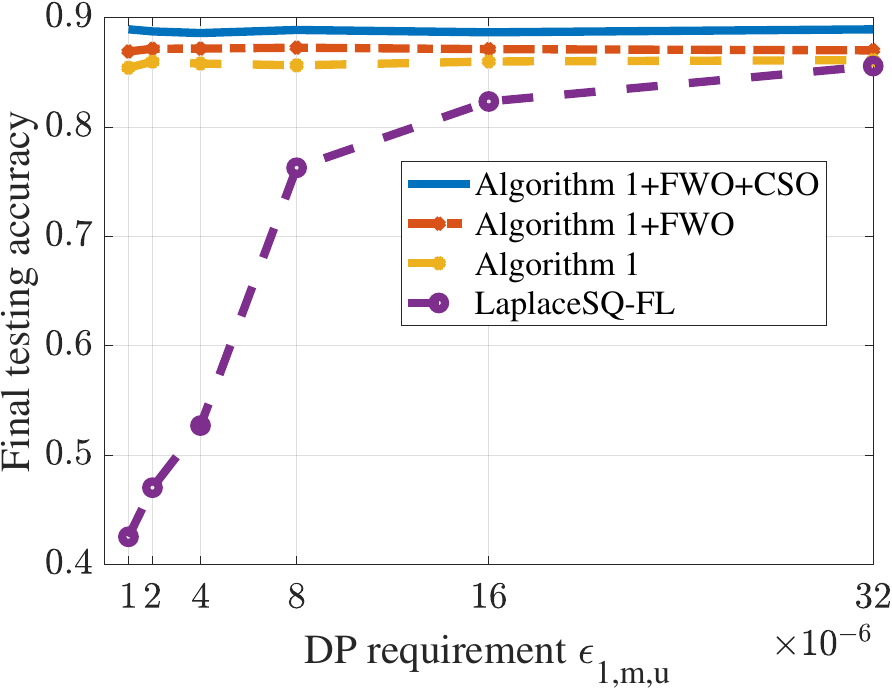}%
     \label{fig:5.3a}
        }%
       \vspace{-0.2cm}
        \caption{a) Testing accuracy of Algorithm~\ref{algorithm1}+FWO+CSO, Algorithm~\ref{algorithm1}+FWO, Algorithm~\ref{algorithm1}, and LaplaceSQ-FL when $\epsilon_{1,m,u} = 10^{-6},\forall d_{m,u}$. b) and c)~Final testing accuracy of Algorithm~\ref{algorithm1}+FWO+CSO, Algorithm~\ref{algorithm1}+FWO, Algorithm~\ref{algorithm1}, and LaplaceSQ-FL for $\epsilon_{1,m,u} \in 10^{-6} \times\{1,2,4,8,16,32\}$ when $\sigma_{2,u'} = 0.125$ and $\sigma_{2,u'} = 1.25\times10^{-2}$, respectively.}
        \label{fig:5.2}
        \vspace{-6mm}
    \end{figure*}
\subsection{Learning Utility Evaluation}
\subsubsection{Numerical Setting}
\textit{ML Model and Data Set:} The experiments are conducted on the MNIST data set \cite{lecun2010} for image classification, using a cross-entropy training loss function to train an ML model.
The ML model consists of two fully connected layers.
The input image is passed through the first fully connected layer with 200 neurons, followed by a ReLU activation function.
The second fully connected layer has 10 neurons, followed by a softmax activation function for classification.
The ML model contains $d = 159,010$ parameters. 
MNIST data set contains $60,000$ training samples and $10,000$ testing data samples.
The training and testing data are i.i.d. across devices with each device owning $D = 600$ training samples and $100$ testing samples.
These devices are divided into two groups based on quantization resolution: 50 devices in $\cG_1$ with {2-bit} quantization and 50 devices in $\cG_2$ with 4-bit quantization, i.e., $M = 2$, $b_1 = 2$, $b_2 = 4$, and $g_1 = g_2 = 50$. 
The total quantization bits constraint, as specified in \eqref{eq:op1a}, is set to $B = 30$ bits. 
The device-to-FC noises are set with standard deviations $\sigma_{1,u} = 6.25\times10^{-4}$ for 2-bit devices ($u \in \cG_1$) and $\sigma_{2,u'} = 0.125$ for 4-bit devices ($u' \in \cG_2$). 
In the following, the aforementioned parameters are fixed unless specified otherwise.\looseness=-1
\textit{Hyperparameters:} The FL process consists of $T = 20$ rounds in Algorithm~\ref{algorithm1}. 
In each round, $N = 10$ devices  are allowed to participate in the learning process as specified in \eqref{eq:op1c}. 
Each device performs $L = 10$ local mini-batch SGD iterations with batch size $|\cB_{l,m,u,t}| = 10, \forall l,d_{m,u},t$ with the $\ell_1$-norm clipping constant $C = 10$. 
\textit{Benchmark}: 
The proposed Algorithm~\ref{algorithm1} is compared with a baseline that combines a standard FL using SQ from \cite{Alistarh2017} and the Laplace mechanism from Lemma~\ref{LpMechanism}. 
This baseline, denoted as LaplaceSQ-FL, serves as a reference to evaluate the effectiveness of the proposed differentially private quantized FL in Algorithm~\ref{algorithm1}.  
In the following simulation, the benchmark LaplaceSQ-FL adopts the sate-of-the-art fusion weight design in \cite{CongShen2021}, which assigns the fusion weights inversely-proportional to quantization resolution.\looseness=-1
We evaluate the testing accuracy and training loss of Algorithm~\ref{algorithm1} under three distinct  configurations. 
The first configuration incorporates both fusion weight optimization (FWO), as described in \eqref{eq:fusionWOpt}, and cluster sizes optimization (CSO), as described in Section~\ref{SecVB}, which is referred to as Algorithm~\ref{algorithm1}+FWO+CSO.  
The second configuration only applies FWO while selecting cluster sizes randomly, which is referred to as  Algorithm~\ref{algorithm1}+FWO.
The third configuration is simply referred to as Algorithm~\ref{algorithm1}, in which the fusion weights are distributed uniformly across model updates, i.e., $\omega_{m,u} = \frac{1}{N}$, and cluster sizes are randomly selected to satisfy \eqref{eq:op1a} and \eqref{eq:op1c}.\looseness=-1  

\subsubsection{Training Loss and Testing Accuracy}
Fig.~\ref{fig:5.1a} illustrates the training loss of Algorithm~\ref{algorithm1}+FWO+CSO, Algorithm~\ref{algorithm1}+FWO, Algorithm~\ref{algorithm1}, and the benchmark LaplaceSQ-FL under a  DP requirement of $\epsilon_{1,m,u} = 10^{-6}, \forall d_{m,u}$. 
It can be seen from Fig.~\ref{fig:5.1a} that applying FWO to Algorithm~\ref{algorithm1} improves its training performance, and this improvement is further enhanced with the incorporation of CSO. 
All training loss curves show a steadily decreasing trend as the number of FL rounds $T$ increases. 
The benchmark LaplaceSQ-FL, however, exhibits a higher training loss compared to Algorithm~\ref{algorithm1}+FWO+CSO, Algorithm~\ref{algorithm1}+FWO, and Algorithm~\ref{algorithm1}.
This is attributed to two factors: (i) the high variance of the artificial Laplace noise, required for the DP constraint $\epsilon_{1,m,u} = 10^{-6}$, and (ii)~larger weights assigned to noisy updates from $4$-bit devices according to the fusion approach in \cite{CongShen2021}. 
Fig.~\ref{fig:5.1b} presents the corresponding testing accuracy of Algorithm~\ref{algorithm1}+FWO+CSO, Algorithm~\ref{algorithm1}+FWO, Algorithm~\ref{algorithm1}, and the benchmark LaplaceSQ-FL for the training loss in Fig.~\ref{fig:5.1a}. 
The results in Fig.~\ref{fig:5.1b} show that the proposed approaches exhibit an increase in the testing accuracy as $t$ increases.
Specifically, Algorithm~\ref{algorithm1} itself achieves approximately $70\%$ testing accuracy after 20 FL rounds, which improves to $75\%$ with FWO and further to $80\%$ when both FWO and CSO are applied. 
In contrast, the benchmark LaplaceSQ-FL attains a significantly lower testing accuracy of $41\%$ at the end of the FL process. 
The accuracy trend in Fig.~\ref{fig:5.1b} aligns with the training loss behavior observed in Fig.~\ref{fig:5.1a}.
Fig.~\ref{fig:5.2a} demonstrates the final training loss, obtained after $T=20$ FL rounds, of Algorithm~\ref{algorithm1}+FWO+CSO, Algorithm~\ref{algorithm1}+FWO, Algorithm~\ref{algorithm1}, and LaplaceSQ-FL with $\epsilon_{1,m,u}$ increasing from $10^{-6}$ to $32\times 10^{-6}$.
The final training loss of Algorithm~\ref{algorithm1} is slightly reduced as $\epsilon_{1,m,u}$ increases while those of Algorithm~\ref{algorithm1}+FWO+CSO and  Algorithm~\ref{algorithm1}+FWO remain stable across varying $\epsilon_{1,m,u}$ values.
The benchmark LaplaceSQ-FL exhibits performance improvement as the DP requirement is relaxed (i.e., as $\epsilon_{1,m,u}$ increases).
As the DP requirement becomes stricter (i.e., as $\epsilon_{1,m,u}$ decreases),  LaplaceSQ-FL exhibits a larger final training loss compared to the proposed algorithms.
Fig.~\ref{fig:5.2b} presents the corresponding final testing accuracy.
The proposed methods outperform LaplaceSQ-FL in testing accuracy across all $\epsilon_{1,m,u}$ values, while LaplaceSQ-FL's testing accuracy improves only when $\epsilon_{1,m,u}$ increases.
Overall, the trends in Figs.~\ref{fig:5.2a}-\ref{fig:5.2b} confirm the benefits of the proposed algorithms, which maintains stable learning performance across different $\epsilon_{1,m,u}$ values.\looseness=-1
\begin{figure*}
\begin{subequations}\small
\label{T2_eq1d0}
    \beq
    \d4\d4 \E \left[ \left\| \whw^{t+1} - \overline{\bw}^{t+1} \right\|^2_2 \right] \d4& = &\d4 \E \left[ \left\| \bw^t + \sumM \sum_{d_{m,u} \in \cC_m}\omega_{m,u}\widehat{\bw}^{m,u,t} - \sumM \sum_{d_{m,u} \in \cC_m} \omega_{m,u}\bw^{L,m,u,t} \right\|^2_2 \right], \nonumber \\
    \d4& = &\d4  \E \left[ \left\| \bw^t + \sumM \sum_{d_{m,u} \in \cC_m}\omega_{m,u}\cQ_{b_m, \epsilon_{1,m,u}}(\widehat{\bv}^{m,u,t}) - \sumM \sum_{d_{m,u} \in \cC_m} \omega_{m,u}\bw^{L,m,u,t} \right\|^2_2 \right], \nonumber\\
    \d4& = &\d4  \E \left[ \left\|\sumM \sum_{d_{m,u} \in \cC_m}\omega_{m,u}\cQ_{b_m, \epsilon_{1,m,u}}(\widehat{\bv}^{m,u,t}) - \sumM \sum_{d_{m,u} \in \cC_m} \omega_{m,u}(\bw^{L,m,u,t}-\bw^t) \right\|^2_2 \right], \nonumber\\
    \d4& \leq & \d4 \scalemath{0.93}{\E_{\{\cC_m\}_{m=1}^M} \left[ \E\left[\Big( \sumM \sum_{d_{m,u} \in \cC_m}\omega^2_{m,u} \Big) \sumM \sum_{d_{m,u} \in \cC_m} \d4 \left\|\cQ_{b_m, \epsilon_{1,m,u}}(\widehat{\bv}^{m,u,t}) - \bv^{m,u,t} \right\|^2_2 \Bigg| \{\cC_m\}_{m=1}^M \right]\right],}\label{T2_eq1a} \\
    \d4& \leq & \d4 \scalemath{0.93}{\E_{\{\cC_m\}_{m=1}^M} \left[ \E\left[ \sumM \sum_{d_{m,u} \in \cC_m} \d4 \left\|\cQ_{b_m, \epsilon_{1,m,u}}(\widehat{\bv}^{m,u,t}) - \bv^{m,u,t} \right\|^2_2 \Bigg| \{\cC_m\}_{m=1}^M \right] \right],}\label{T2_eq1a.1} \\
    \d4& = &\d4 \scalemath{0.93}{\E_{\{\cC_m\}_{m=1}^M} \left[ \sumM \sum_{d_{m,u} \in \cG_m} \d4 \indi_{\cC_m}(d_{m,u})\E\left[ \left\|\cQ_{b_m, \epsilon_{1,m,u}}(\widehat{\bv}^{m,u,t}) - \bv^{m,u,t} \right\|^2_2  \right] \right],}\nonumber \\
    \d4& = &\d4 \scalemath{1}{\sumM \sum_{d_{m,u} \in \cG_m} \frac{c_m}{g_m} \E \left[ \left\|\cQ_{b_m, \epsilon_{1,m,u}}(\widehat{\bv}^{m,u,t}) -\widehat{\bv}^{m,u,t} + \widehat{\bv}^{m,u,t}  - \bv^{m,u,t} \right\|^2_2 \right]},\label{T2_eq1b} \\ 
    \d4& \leq &\d4   \scalemath{1}{2\sumM \sum_{d_{m,u} \in \cG_m} \frac{c_m}{g_m}\E \left[ \left\|\cQ_{b_m, \epsilon_{1,m,u}}(\widehat{\bv}^{m,u,t}) -\widehat{\bv}^{m,u,t}\right\|^2_2\right]}\nonumber \\ &&~~~~~~~~~~~~~~~~~~~~~~~~~~~~~~+ 2 \sumM \sum_{d_{m,u} \in \cG_m} \frac{c_m}{g_m}\E \left[ \left\| \min \left\{1,\frac{C}{\| \bv^{m,u,t}\|_1} \right\}\bv^{m,u,t}  - \bv^{m,u,t} \right\|^2_2 \right], \label{T2_eq1c} \\
    \d4 &\leq& \d4 2 \sumM \sum_{ d_{m,u} \in \cG_m } 
    \frac{c_m}{g_m}
    \frac{4dC^2}{(2^{b_m}-1)^2} 
    + 2\Lambda  \sumM \sum_{d_{m,u} \in \cG_m} \frac{c_m}{g_m} \E \left[ \|\bv^{m,u,t}\|^2_2 \right],\label{T2_eq1d}
    \eeq 
\end{subequations}
\hrulefill
\vspace{-5mm}
\end{figure*}
Figs.~\ref{fig:5.3b}-\ref{fig:5.3a} demonstrates the final training loss and testing accuracy performance of Algorithm~\ref{algorithm1}+FWO+CSO, Algorithm~\ref{algorithm1}+FWO, Algorithm~\ref{algorithm1}, and LaplaceSQ-FL for $\sigma_{2,u'} = 1.25 \times10^{-2} (u' \in \cG_2)$ (reducing link noise of 4-bit devices) and $\epsilon_{1,m,u}$ varying from $10^{-6}$ to $32 \times 10^{-6}$.
Similar trends as in Figs.~\ref{fig:5.2a}-\ref{fig:5.2b} can be observed from Figs.~\ref{fig:5.3b}-\ref{fig:5.3a} except for that the final training loss and testing accuracy performances of LaplaceSQ-FL are relatively improved, compared to Figs.~\ref{fig:5.2a}-\ref{fig:5.2b}, when the DP requirement is relaxed ($\epsilon_{1,m,u} = 32 \times 10^{-6}$). 
This improvement comes from a reduction in the link noise variance of 4-bit devices from $0.125$ (Figs.\ref{fig:5.2a}-\ref{fig:5.2b}) to $1.25\times10^{-2}$ (Figs.~\ref{fig:5.3b}-\ref{fig:5.3a}) and a decrease in the Laplace noise variance when DP requirement is relaxed to $\epsilon_{1,m,u} = 32 \times 10^{-6}$.
Our proposed methods are noteworthy for maintaining a stable learning utility (around $90\%$ accuracy) across both strict and relaxed DP requirements. Importantly, they effectively resolve the inherent learning utility and privacy tradeoffs, especially under stringent privacy constraints (as $\ep_{1,m,u}$ decreases), which is in contrast to LaplaceSQ-FL.

\section{Conclusion}\label{SecVII}
In this paper, we proposed a unified framework to enhance the learning utility of the privacy-preserving quantized FL algorithm under quantization heterogeneity. 
In our FL network model, clusters of devices with different quantization resolutions are allowed to participate in each FL round. 
We proposed a novel stochastic quantizer that ensures a DP requirement while minimizing the quantization distortion. 
The proposed quantizer leverages the inherent randomness to provide training data privacy protection while achieving minimal distortion. 
Notably, our quantizer exhibits a bounded distortion, contrasting with a conventional DP mechanism (LaplaceSQ) where the distortion can grow unbounded.
{It is because the distortion of the proposed SQ decreases monotonically with DP requirement while the distortion of LaplaceSQ increases.}
To deal with the quantization heterogeneity, we developed a cluster size optimization method aimed at minimizing learning error in combination with a linear fusion technique that improves model aggregation accuracy.
Throughout the paper, numerical simulations demonstrated the effectiveness of the proposed framework, showcasing privacy protection capabilities and improved learning utility compared to existing methods. 
These results highlight the potential of our framework for privacy-preserving quantized FL systems.\looseness=-1
\begin{figure*}
\small
\begin{subequations}\small\label{T2_eq5}
    \beq
 \d4\d4\E[\|\overline{\be}^{m,u,t}\|^2_2]  \d4& = &\d4 \E\left[ \left\|\Wnabla f_{m,u}(\bw^{t}) 
 - \nabla f(\bw^{t})\right\|^2_2\right] + \E\left[\left\|\sum_{l=1}^{L-1} (\Wnabla f_{m,u}(\bw^{l,m,u,t}) - \nabla f(\bw^{l,m,u,t}) + \nabla f(\bw^{l,m,u,t}) - \nabla f(\bb^{l,t}))\right\|^2_2\right], \label{T2_eq5a0} \\ 
 \d4 &\leq& \d4 \zeta + \E\left[\left\|\sum_{l=1}^{L-1} (\Wnabla f_{m,u}(\bw^{l,m,u,t}) \!-\! \nabla f(\bw^{l,m,u,t})) \!+\! \sum_{l=1}^{L-1} (\nabla f(\bw^{l,m,u,t}) \!-\! \nabla f(\bb^{l,t}))\right\|^2_2\right], \label{T2_eq5a}\\ 
  \d4 &=& \d4 \zeta  + \E\left[ \left\| \sum_{l=1}^{L-1}\Wnabla f_{m,u}(\bw^{l,m,u,t}) - \nabla f(\bw^{l,m,u,t})\right\|^2_2\right] + \E\left[\left\|\sum_{l=1}^{L-1} \nabla f(\bw^{l,m,u,t}) - \nabla f(\bb^{l,t})\right\|^2_2\right],\label{T2_eq5a.1}\\
 \d4 &\leq& \d4 \zeta + \sum_{l=1}^{L-1} \E \left[ \left \| \Wnabla f_{m,u}(\bw^{l,m,u,t}) - \nabla f(\bw^{l,m,u,t}) \right\|^2_2 \right] 
 + (L-1)\sum_{l=1}^{L-1} \E \left[ \left \| \nabla f(\bw^{l,m,u,t}) - \nabla f(\bb^{l,t}) \right \|^2_2 \right],\label{T2_eq5b}\\
 &\leq& L\zeta + (L-1)\gamma^2\sum_{l=1}^{L-1}\E\Big[\|\bw^{l,m,u,t} - \bb^{l,t}\|^2_2\Big],\label{T2_eq5c} 
    \eeq 
\end{subequations}
\hrulefill
\vspace{-5mm}
\end{figure*}
\appendices
\section{Proof of Lemma~\ref{lm1}}\label{ProofLM1}
Without loss of generality, we consider the case when $|q_i-a| \leq |q_{i+1}-a|$.
From the constraints in \eqref{eq3c}, we have $\frac{1}{e^{\epsilon_1}+1} \leq p \leq \frac{e^{\epsilon_1}}{e^{\epsilon_1}+1}$. 
It is not difficult to verify that  $p(q_i-a)^2 + (1-p)(q_{i+1}-a)^2 = p[(q_i-a)^2  - (q_{i+1}-a)^2]  +  (q_{i+1}-a)^2 \geq \frac{e^{\epsilon_1}}{e^{\epsilon_1}+1}[(q_i-a)^2  - (q_{i+1}-a)^2]  +  (q_{i+1}-a)^2 = \frac{e^{\epsilon_1}}{e^{\epsilon_1}+1}(q_i-a)^2 + \frac{1}{e^{\epsilon_1}+1}(q_{i+1}-a)^2$, where the last inequality follows from the assumption $|q_i-a|  \leq |q_{i+1}-a|$. 
Thus, the minimum value of the objective function in \eqref{eq3a} is achieved if and only if $p = \frac{e^{\epsilon_1}}{e^{\epsilon_1}+1}$.
The derivation remains analogous for the case when $|q_i-a| \geq |q_{i+1}-a|$. 
This completes the proof.
\section{Proof of Lemma~\ref{lm:2ndTerm}}\label{AppTerm2}
The term $\E[\| \whw^{t+1} - \overline{\bw}^{t+1} \|^2_2]$ can be bounded as in \eqref{T2_eq1d0}, where \eqref{T2_eq1a} follows from \eqref{eqCS} and Step 10 of Algorithm~\ref{algorithm1};
\eqref{T2_eq1a.1} follows from the fact that $\sumM \sum_{d_{m,u} \in \cC_m}\omega_{m,u} = 1$;  
\eqref{T2_eq1b} follows from $\Pr[d_{m,u} \in \cC_{m}] = \frac{1}{g_m}, \forall d_{m,u} \in \cG_m$;
\eqref{T2_eq1c} follows from the arithmetic-geometric mean (AGM) inequality and Step~\ref{step11} of Algorithm~\ref{algorithm1}; 
and \eqref{T2_eq1d} follows from Assumption~\ref{assumption4} and from the fact that, for $\bx \in \R^{d \times 1}$ with $\|\bx\|_{1} \leq C$, $\E[\|\cQ_{b,\epsilon_1}(\bx) - \bx\|_2^2] = \sum_{i=1}^{d}\E[|\cQ_{b,\epsilon_1}(x_i)-x_i|^2] \leq \frac{4dC^2}{(2^{b}-1)^2}$. 
The last bound is due to the distortion in \eqref{eq:g_eps1} and the fact that $\max\{(q_i-a)^2,(q_{i+1}-a)^2\} \leq \frac{(\overline{a} - \underline{a})^2}{(2^b-1)^2}$.
From Step \ref{step10} of Algorithm~\ref{algorithm1}, the term $\|\bv^{m,u,t}\|^2_2$ in \eqref{T2_eq1d} can be rewritten as $\|\bv^{m,u,t}\|^2_2 = \|\bw^{L,m,u,t}-\bw^t\|^2_2 =  \big\| \bw^t - \eta_t\sum_{l=0}^{L-1} \Wnabla f_{m,u}(\bw^{l,m,u,t}) - \bw^t\big\|^2_2$ $=  \big\| \eta_t\sum_{l=0}^{L-1} \Wnabla f_{m,u}(\bw^{l,m,u,t})\big\|^2_2$.
Defining $\overline{\be}^{m,u,t} = \sum_{l=0}^{L-1} \big(\Wnabla f_{m,u}(\bw^{l,m,u,t}) -\nabla f(\bb^{l,t}) \big)$ yields
\begin{subequations}\small
    \beq
    \left \|\bw^{L,m,u,t}-\bw^t \right\|^2_2 \d4& = &\d4 \left\| \eta_t(\bg^t + \overline{\be}^{m,u,t})\right\|^2_2, \label{T2_eq2a} \\ 
    \d4& \leq &\d4 2\eta_t^2 \|\bg^t\|^2_2 + 2\eta_t^2\|\overline{\be}^{m,u,t}\|^2_2, \label{T2_eq2}
    \eeq
\end{subequations} where \eqref{T2_eq2a} is due to {\small $\bg^t = \sum_{l=0}^{L-1} \nabla f(\bb^{l,t})$} and \eqref{T2_eq2} is due to the AGM inequality. 
Plugging \eqref{T2_eq2} into \eqref{T2_eq1d} and noting that {\small $\sumM \sum_{d_{m,u} \in \cG_m}\frac{c_m}{g_m} = N$} and $\frac{c_m}{g_m} \leq 1$ gives \looseness=-1
\begin{multline}
\d4\!\!\scalemath{0.9}{\E \Big[ \| \whw^{t+1} - \overline{\bw}^{t+1} \|^2_2 \Big]  \leq  8 d C^2 \sumM \frac{c_m}{(2^{b_m}-1)^2} } \\ \scalemath{0.9}{+ 4 \Lambda N \eta_t^2 \E[\|\bg^t\|^2_2]  \!+\! 4 \Lambda \eta_t^2 \sumM \sum_{d_{m,u} \in \cG_m}\d4\!\! \E[\|\overline{\be}^{m,u,t}\|^2_2].}\d4   \label{T2_eq3}
\end{multline}
Next, we aim to find upper bounds for $\E[\|\bg^t\|^2_2]$ and $\E[\|\overline{\be}^{m,u,t}\|^2_2]$ in \eqref{T2_eq3}. 
Applying Jensen's inequality to $\ell_2$-norm yields {\small $\E[\|\bg^t\|^2_2] = \E[\|\sum_{l=0}^{L-1}\nabla f(\bb^{l,t})\|^2_2] \leq L \sum_{l=0}^{L-1}\E[\|\nabla f(\bb^{l,t})\|^2_2] = L \sum_{l=0}^{L-1}\E[\|\nabla f(\bb^{l,t}) - \nabla f(\bw^{\star})\|^2_2]$}.
Therefore, the following holds 
\begin{subequations}\small
\beq
    \E[\|\bg^t\|^2_2] & \leq & L \sum_{l=0}^{L-1} \gamma^2(1-\mu\eta_t)^{l} \E[\|\bw^t - \bw^{\star}\|^2_2], \label{T2_eq4.1} \\ 
    & \leq & L^2\gamma^2 \E[\|\bw^t - \bw^{\star}\|^2_2],\label{T2_eq4} 
\eeq
\end{subequations}where \eqref{T2_eq4.1} is due to Assumption~\ref{assumption1} and \eqref{T1_eq6}; and \eqref{T2_eq4} follows from $0 \leq  (1-\mu\eta_t)^{l} \leq 1$.
From $\bb^{0,t} = \bw^t$, we rewrite {\small $\E[\|\overline{\be}^{m,u,t}\|^2_2] = \E\big[\|\Wnabla f_{m,u}(\bw^{t}) 
 - \nabla f(\bw^{t}) + \sum_{l=1}^{L-1} \big(\Wnabla f_{m,u}(\bw^{l,m,u,t}) - \nabla f(\bw^{l,m,u,t}) + \nabla f(\bw^{l,m,u,t}) - \nabla f(\bb^{l,t})\big)\|^2_2\big]$}. 
 Thus, we have the bound in \eqref{T2_eq5}, where \eqref{T2_eq5a0}, \eqref{T2_eq5a}, and \eqref{T2_eq5a.1} follow from  Assumption~\ref{assumption3}; 
\eqref{T2_eq5b} follows from Assumption~\ref{assumption3} and Jensen's inequality;  
and \eqref{T2_eq5c} follows from Assumption~\ref{assumption1} and Assumption~\ref{assumption3}.
From \eqref{T2_eq5c}, we have {\small $\sumM \sum_{d_{m,u} \in \cG_m} \E[\|\overline{\be}^{m,u,t}\|^2_2] \leq  KL\zeta + (L-1)\gamma^2 \sum_{l=1}^{L-1}\sumM \sum_{d_{m,u} \in \cG_m} \E\Big[\|\bw^{l,m,u,t} - \bb^{l,t}\|^2_2\Big]$}. 
Thus, from the definition of $a^{l,t}$ in \eqref{eq:defat} and the bound in \eqref{T1_eq10}, the following holds
\begin{subequations}\small
    \beq
    \!\!\sumM \!\sum_{d_{m,u} \in \cG_m} \!\d4 \!\!\E[\|\overline{\be}^{m,u,t}\|^2_2] 
    \!\!\d4&\leq&\!\!\d4 KL\zeta \!+\! K(L-1)\gamma^2 \sum_{l=1}^{L-1}a^{l,t},  \nonumber \\  
    \!\!\!\d4& \leq &\!\!\d4 KL\zeta \!+\! KL^2\gamma^2\eta_t^2L\zeta(1+L\eta_t^2\gamma^2)^L, \nonumber\\
    \d4& \leq &\!\!\d4 KL\zeta \!+\! KL^3\gamma^2\eta_t^2\zeta e,  \label{T2:eq6b}
    \eeq 
\end{subequations}
where \eqref{T2:eq6b} follows from the fact $(1+x) \leq e^x$ for $x \geq 0$, and $\eta_t \leq \frac{1}{L\gamma}$. 
Plugging \eqref{T2:eq6b} and \eqref{T2_eq4} into \eqref{T2_eq3} gives {\small $\E \Big[ \| \whw^{t+1} - \overline{\bw}^{t+1} \|^2_2 \Big] \leq 8 d C^2 \sumM \frac{c_m}{(2^{b_m}-1)^2} + 4\Lambda N \eta_t^2L^2\gamma^2 \E[\|\bw^t - \bw^{\star}\|^2_2] +  4\Lambda K \eta_t^2(L\zeta + L^3\gamma^2\zeta\eta_t^2e)$}, which completes the proof.\sloppy
\section{Proof of Lemma~\ref{lm:3rdTerm}}\label{AppTerm3}
From the definitions of $\bw^{t+1}$ and $\whw^{t+1}$, the following holds
\begin{multline}
\label{T3_eq1} \scalemath{0.9}{ 
    \E[\|\bw^{t+1} - \whw^{t+1}\|_2^2]  =  \E\left[ \left\|\sumM \sum_{d_{m,u} \in \cC_m} \omega_{m,u}\bn^{m,u,t} \right\|^2_2\right]}, \\
    ~~~~~~~~~~~~~~ \scalemath{0.9}{= \E\!\left[ \left\|\sumM \sum_{d_{m,u} \in \cG_m}\!\!\! \indi_{\cC_m}(d_{m,u})  \omega_{m,u}\bn^{m,u,t} \right\|^2_2\right].}\!\!\!\!\!\!
\end{multline}
Using conditional expectation, \eqref{T3_eq1} can be written as $ {\small \E\Big[ \Big\|\sumM \sum_{d_{m,u} \in \cG_m} \indi_{\cC_m}(d_{m,u}) \omega_{m,u}\bn^{m,u,t} \Big\|^2_2\Big] =} $ $\scalemath{0.85}{\E_{\{\cC_m\}_{m=1}^M} \! \Big[ \E_{\{\bn^{m,u,t}\}}\!\Big[ \Big\|\sumM \! \sum_{d_{m,u} \in \cG_m}\!\! \indi_{\cC_m}(d_{m,u}) \omega_{m,u}\bn^{m,u,t} \Big\|^2_2\Big| \{\cC_m\}_{m=1}^M \Big] \Big]}$. 
Therefore, the following holds \sloppy
\begin{subequations}\small
    \beq
    \d4\E && \d4\d4\d4 \left[ \left\|\sumM \sum_{d_{m,u} \in \cG_m} \indi_{\cC_m}(d_{m,u}) \omega_{m,u}\bn^{m,u,t} \right\|^2_2\right],\nonumber  \\ 
     && ~~~=\E_{\{\cC_m\}_{m=1}^M} \left[ \sumM \sum_{d_{m,u} \in \cG_m}\!\!\! \indi_{\cC_m}(d_{m,u}) \omega^2_{m,u}d\sigma_{m,u}^2 \right]\!\! ,~~~~~\label{T3_eq5a}\\
     && ~~~\leq d\sumM \sum_{d_{m,u} \in \cG_m} \frac{c_m}{g_m}\sigma_{m,u}^2,~~ \d4\d4\d4\label{T3_eq5}
    \eeq 
\end{subequations}where \eqref{T3_eq5a} follows from the fact that $\bn^{m,u,t} \sim \cN(\mathbf{0}, \sigma_{m,u}^2\bI)$ and $\{\bn^{m,u,t}\}$ are independent; 
and \eqref{T3_eq5} follows from the fact that $\Pr[d_{m,u} \in \cC_m] = \frac{1}{g_m}$ and $0 \leq \omega_{m,u} \leq 1, \forall d_{m,u}$. 
Substituting \eqref{T3_eq5} into \eqref{T3_eq1} gives $\E[\|\bw^{t+1} - \whw^{t+1}\|_2^2] \leq d\sumM \sum_{d_{m,u} \in \cG_m} \frac{c_m}{g_m}\sigma_{m,u}^2$, which completes the proof.
\section{Proof of Theorem~\ref{theorem1}}\label{AppTheorem1}
For $m_{1,t}$ and $t\geq t_0$, we have the following upper bound,
\begin{subequations}
    \beq
 m_{1,t} & = & (1+K\eta_t^2)(1-\mu \eta_t)^L + 4\eta_t^2 \Lambda N L^2\gamma^2, \nonumber\\
   & \leq & (1+K\eta_t^2)e^{-L\mu\eta_t} +  4\eta_t^2 \Lambda NL^2\gamma^2, \nonumber \\
    & \leq & (1+K\eta_t^2)(1 -L\mu\eta_t + L^2\mu^2\eta^2_t) +  4\eta_t^2 \Lambda N L^2\gamma^2, \nonumber \\
   & = & 1 - L \mu  \eta_t + L^2 \mu^2  \eta^2_t, \nonumber \\ & & + K\eta_t^2(1 -  L\mu \eta_t +  L^2\mu^2 \eta^2_t)+ 4\Lambda N L^2\gamma^2\eta_t^2 , \nonumber \\ 
   & \leq & 1- L\mu \eta_t + L^2\mu^2\eta_t^2   + K\eta_t^2+  4\Lambda N L^2\gamma^2\eta_t^2, \label{Theorem1_eq1a} \\
   & = & 1- L\mu \eta_t + (L^2\mu^2 + K+4\Lambda N L^2\gamma^2)\eta_t^2, \nonumber\\
   & \leq & 1-\frac{ L\mu \eta_t}{2}\label{Theorem1_eq1b}, \\ 
   & = & 1 - \frac{4}{t},\label{Theorem1_eq1}
    \eeq 
\end{subequations} where \eqref{Theorem1_eq1a} follows from  $\eta_t \leq \eta_{t_0}  \leq \frac{1}{L\mu}$, \eqref{Theorem1_eq1b} follows from $\eta_t \leq \frac{L\mu}{2(L^2\mu^2 + K + 4\Lambda N L^2\gamma^2)}$, and \eqref{Theorem1_eq1} is due to $\eta_t = \frac{8}{ L\mu t}$.  
From \eqref{eq:ub5} and \eqref{Theorem1_eq1}, we have 
\begin{equation}\small
\label{Theorem1_eq2}
\E[\|\bw^{t+1}-\bw^{\star}\|^2_2]  \leq  \left( 1 - \frac{4}{t}\right)\E[\|\bw^t - \bw^{\star}\|^2_2] + m_{2,t}  + \Delta.
\end{equation}
Next, we proceed by induction to show that \eqref{eqtheorem1} holds. 
For $t = t_0$, the inequality in \eqref{eqtheorem1} holds trivially.
Assume that \eqref{eqtheorem1} holds for $t = t_1 > t_0$. 
We verify that it also holds for $t = t_1+1$ in the following.
Applying \eqref{Theorem1_eq2} and the induction hypothesis for $t_1$ gives {\small $\E[\|\bw^{t_1+1}-\bw^{\star}\|^2_2]  \leq    \Big(1 - \frac{4}{t_1}\Big) \E[\|\bw^{t_1} - \bw^{\star}\|^2_2] + m_{2,t_1}  + \Delta \leq  \Big(1-\frac{4}{t_1}\Big)\Big(\frac{t_0}{t_1}\E[\|\bw^{t_0} - \bw^{\star}\|^2_2] + m_{3,{t_1}} + (t_1-4)\Delta \Big) + m_{2,t_1}  + \Delta$}.
It is noted that $m_{2,t_1} = \frac{k_1}{t_1^2} + \frac{k_2}{t_1^4}$.
Thus, the following holds
\begin{equation}\small\label{eq:lastbound}
    \begin{aligned}
     \d4\E[\|&\bw^{t_1+1}-\bw^{\star}\|^2_2] \\
     \d4& \leq  \left(1-\frac{4}{t_1}\right)\frac{t_0}{t_1}\E[\|\bw^{t_0} - \bw^{\star}\|^2_2] + \left(1-\frac{4}{t_1}\right)m_{3,t_1} \\
     \d4&~~~~~~~~~~~~~~~~~~~~~~~~~~~~~~~~~~~~~~~~~~~~~~~+ m_{2,t_1}+ (t_1-3)\Delta,  \\
     \d4& =  \left(1-\frac{4}{t_1}\right)\frac{t_0}{t_1}\E[\|\bw^{t_0} - \bw^{\star}\|^2_2] + \left(1-\frac{4}{t_1}\right)\left(\frac{k_1}{t_1}+\frac{k_2}{t_1^3}\right) \\ 
     \d4& ~~~~~~~~~~~~~~~~~~~~~~~~~~~~~~~~~~~~~~~~~~~+ \frac{k_1}{t_1^2} + \frac{k_2}{t_1^4} + (t_1-3) \Delta, \\
     \d4& = \left(1-\frac{4}{t_1}\right)\frac{t_0}{t_1}\E[\|\bw^{t_0} - \bw^{\star}\|^2_2] + \frac{(t_1-3)k_1}{t_1^2} \\
    \d4& ~~~~~~~~~~~~~~~~~~~~~~~~~~~~~~~~~~~~~~~~+ \frac{(t_1-3)k_2}{t_1^4} + (t_1-3)\Delta, \\
    \d4& \leq  \frac{t_0}{t_1 \!+\! 1}\E[\|\bw^{t_0} \!-\! \bw^{\star}\|^2_2] + m_{3,t_1+1}  \!+\! (t_1\!+\!1\! -\! 4)\Delta,
\end{aligned}
\end{equation}
where the last inequality is due to the fact that $(1-\frac{4}{t_1})\frac{t_0}{t_1} \leq \frac{t_0}{t_1+1}$, $\frac{t_1-3}{t_1^2} \leq \frac{1}{t_1+1}$, and $\frac{t_1-3}{t_1^4} \leq \frac{1}{(t_1+1)^3}$.
The bound in \eqref{eq:lastbound} indicates that \eqref{eqtheorem1} also holds for $(t_1+1)$th FL iteration. 
This completes the induction proof.

\bibliographystyle{IEEEtran} 
\bibliography{biblib}

\end{document}